\ifcvprfinal\pagestyle{empty}\fi
\newtheorem{lemma}{Lemma}
\newtheorem*{lemma*}{Lemma}
\newtheorem{proposition}{Proposition}
\newtheorem*{proposition*}{Proposition}
\newtheorem*{theorem*}{Theorem}
\newtheorem{remark}{Remark}
\DeclareMathOperator*{\dom}{dom} 
\DeclareMathOperator*{\argmin}{arg\,min} 
\DeclareMathOperator*{\argmax}{arg\,max} 
\def\conv{\operatorname{conv}}
\def\map{\operatorname{map}}
\def\CX{{\mathbb X}}
\providecommand{\abs}[1]{\lvert#1\rvert}
\providecommand{\norm}[1]{\lVert#1\rVert}
\newcommand{\R}{\mathbb{R}}
\newcommand{\N}{\mathbb{N}}
\newcommand{\eins}{\mathbbmss{1}}
\newcommand{\SX}{\mathcal{X}}
\newcommand{\SY}{\mathcal{Y}}
\newcommand{\SL}{\mathcal{L}}
\newcommand{\Y}{\mathbb{Y}}
\def\myparagraph#1{\vspace{-1pt}\noindent{\bf #1~~}}
\title{MAP inference via Block-Coordinate Frank-Wolfe Algorithm}
\author{Paul Swoboda\thanks{The work was performed while the first author was at IST Austria. The work was supported by the European Research Council under the European Unions Seventh Framework Programme (FP7/2007-2013)/ERC grant agreement no 616160.}\\
MPI for Informatics, Germany\\
{\tt\small pswoboda@mpi-inf.mpg.de}
\and
Vladimir Kolmogorov\\
IST Austria\\
{\tt\small vnk@ist.ac.at}
}
\begin{document}

\maketitle

\begin{abstract}
We present a new proximal bundle method for Maximum-A-Posteriori (MAP) inference in structured energy minimization problems.
The method optimizes a Lagrangean relaxation of the original energy minimization problem using a multi plane block-coordinate Frank-Wolfe method that takes advantage of the specific structure of the Lagrangean decomposition.
We show empirically that our method outperforms state-of-the-art Lagrangean decomposition based algorithms on some challenging Markov Random Field, multi-label discrete tomography and graph matching problems.
\end{abstract}

\section{Introduction}
Maximum-A-Posteriori (MAP) inference, that is
minimizing an energy function $f: \SX \rightarrow \R$ over a discrete set of labelings $\SX$
is a central tool in computer vision and machine learning.
Many solvers have been proposed for various special forms of energy $f$ and labeling space $\SX$, see~\cite{OpenGMBenchmark} for an overview of solvers and applications for the prominent special case of Markov Random Fields (MRF).
Solvers can roughly be categorized into three categories:
(i) \emph{Exact solvers} that use search techniques (e.g. branch-and-bound) and possibly rely on solving lower bounds with LP-solvers to speed up search,
(ii) \emph{primal heuristics} that find a suboptimal solution with an ad-hoc search strategy adapted to the specific problem and
(iii) \emph{Lagrangean decomposition} (a.k.a.\ dual decomposition) based algorithms that decompose the original problem into smaller efficiently optimizable subproblems and exchange Lagrangean multipliers between subproblems until consensus between subproblems is achieved.

Except when the energy fulfills special assumptions, exact solvers are usually not applicable, since problem sizes in computer vision are too large.
On the other hand, primal heuristics can be fast but solution quality need not be good and no information is given to judge it.
Moreover, algorithms from the first two paradigms are usually developed ad-hoc for specific optimization problems and cannot be easily extended to other ones.
Lagrangean decomposition based algorithms are a good middle ground, since they optimize a dual lower bound, hence can output a gap that shows the distance to optimum, yet use techniques that scale to large problem sizes.
Generalization to new problems is also usually much easier, since subproblems can be readily combined.

A large number of algorithmic techniques have been proposed for optimizing a Lagrangean decomposition for MRFs, including
(i) \emph{Message passing}~\cite{SRMPKolmogorov,Werner07,IRPSLP,MPLP} (a.k.a.\ block coordinate ascent, belief propagation),
(ii) \emph{first order proximal splitting methods}~\cite{RavikumarProximalMethodsMAPMRF,schmidt2011evaluationProximalMAP} and
(iii) \emph{Smoothing based methods}~\cite{LagrangeanRelaxationJohnsonMalioutov,AdaptiveDiminishingSmoothingSavchynskyyUAI},
(iv) \emph{Nesterov schemes}~\cite{savchynskyy2011study,AcceleratedMAPJojic},
(v) \emph{mirror descent}~\cite{MapMirrorDescent},
(vi) \emph{subgradient based algorithms}~\cite{SchlesingerSubgradient,StorvikDahlLagrangeanBasedMAP,DualDecompositionKomodakis}.
In the case of MAP inference in MRFs, the study~\cite{OpenGMBenchmark} has shown that message passing techniques outperform competing Lagrangean decomposition based methods by a large margin.
However, there are two main practical shortcomings of message passing algorithms:
(i) they need not converge to the optimum of the relaxation corresponding to the Lagrangean decomposition: while well-designed algorithms monotonically improve a dual lower bound, they may get stuck in suboptimal fixed points. 
(ii) So called min-marginals must be computable fast for all the subproblems in the given Lagrangean decomposition.
While the above problems do not seem to be an issue for most MAP inference tasks in MRFs, for other problems they are.
In such cases, alternative techniques must be used. 
Subgradient based methods can help here, since they do not possess the above shortcomings: They converge to the optimum of the Lagrangean relaxation and only require finding solutions to the subproblems of the decomposition, which is easier than their min-marginals (as needed for (i)), proximal steps (as needed for (ii)) or smoothed solutions (as needed for (iii) and (iv)).

The simplest subgradient based algorithm is subgradient ascent. However, its convergence is typically slow.
Bundle methods, which store a series of subgradients to build a local approximation of the function to be optimized, empirically converge faster.

\subsection{Contribution \& Organization}
We propose a multi plane block-coordinate version of the Frank-Wolfe method to find minimizing directions in a proximal bundle framework, see Section~\ref{sec:method}.
Our method exploits the structure of the problem's Lagrangean decomposition and is inspired by~\cite{MP-BCFW}.
Applications of our approach to MRFs, discrete tomography and graph matching are presented in Section~\ref{sec:applications}.
An experimental evaluation on these problems is given in Section~\ref{sec:experiments} and suggests that our method is superior to comparable established methods.

    All proofs are given in the Appendix~\ref{sec:appendix}.
A C++-implementation of our Frank-Wolfe method is available at \url{http://pub.ist.ac.at/~vnk/papers/FWMAP.html}.
The MRF, discrete tomography and graph matching solvers built on top of our method can be obtained at \url{https://github.com/LPMP/LPMP}.

\subsection{Related work}
To our knowledge, the Frank-Wolfe method has not yet been used in our general setting, i.e.\ underlying a proximal bundle solver for a general class of structured energy minimization problems.
Hence, we subdivide related work into (i) subgradient/bundle methods for energy minimization, (ii) ad-hoc approaches that use Frank-Wolfe for specific tasks and (iii) proximal bundle methods.

\myparagraph{Subgradient based solvers} have first been proposed by~\cite{StorvikDahlLagrangeanBasedMAP} for MAP-inference in MRFs and were later popularized by~\cite{DualDecompositionKomodakis}.
These works rely on a decomposition of MRFs into trees.
New decompositions for certain MRFs were introduced and optimized in~\cite{OsokinSubmodularRelaxationMRF} for max-flow subproblems and in~\cite{SchraudolphReweightedPerfectMatching} for perfect matching subproblems.  
The work~\cite{yarkony2010covering} used a covering of the graph by a tree and optimized additional equality constraints on duplicated nodes via subgradient ascent.
Usage of bundle methods that store a series of subgradients to build a local approximation of the objective function was proposed by~\cite{kappes2012bundle,SchraudolphReweightedPerfectMatching} for MAP inference in MRFs.

\myparagraph{The Frank-Wolfe} algorithm was developed in the 50s~\cite{FWolfe56} and was popularized recently by~\cite{jaggi2013revisiting}.
In~\cite{BCFW} a block coordinate version of Frank-Wolfe was proposed and applied to training structural SVMs.
Further improvements were given in~\cite{MP-BCFW,Osokin:ICML16}
where, among other things, caching of planes was proposed for the Frank-Wolfe method. Several works have applied Frank-Wolfe to the MAP-MRF inference problem: 
(1) \cite{Schwing:ICML14} used Frank-Wolfe to compute an approximated steepest-descent direction in the local polytope relaxation for MRFs. 
(2) \cite{Meshi:NIPS15} used Frank-Wolfe to solve a modified problem obtained by adding a strictly convex quadratic function to the original objective (either primal
or dual). In contrast to these works, we use Frank-Wolfe inside a proximal method. Furthermore, the papers above
use a fine decomposition into many small subproblems (corresponding to pairwise terms of the energy function), while
we decompose the problem into much larger subproblems.
In general, our decomposition results in fewer dual variables to optimize over, and each Frank-Wolfe update can be expected to give a much larger gain.
Frank-Wolfe was also used in \cite{denseCRF-CVPR17} for MAP inference in dense MRFs with Potts interactions and Gaussian weights.
As we do, they use Frank-Wolfe to optimize proximal steps for MAP-inference. In constrast to our work, they do not optimize Lagrangean multipliers, but the original variables directly.
In other words, they work in the primal, while we work in the dual.
We remark that our formulation is applicable to more general integer optimization problems than either~\cite{Schwing:ICML14,Meshi:NIPS15,denseCRF-CVPR17} and it does not seem straightforward to apply these approaches to our more general setting while only requiring access to MAP-oracles of subproblems.

\myparagraph{Proximal bundle methods} were introduced in~\cite{Kiwiel1985Bundle,Lemarechal1986Bundle} to accelerate subgradient descent algorithms. 
They work by locally building an approximation (bundle) to the function to be optimized and use this bundle to find a descent direction.
For stability, a quadratic (proximal) term is added~\cite{ProximityControlBundleMethodsKiwiel}.
While not theoretically guaranteed, proximal bundle methods are often faster than subgradient methods.

\section{Method}
\label{sec:method}

\myparagraph{Original problem} 
We consider the problem of minimizing a function of Boolean variables
represented as a sum of individual terms:
\begin{equation}
  \label{eq:decomposition}
  \min_{x \in \{0,1\}^d} f(x), \quad f(x) := \sum_{t\in T} f_t(x_{A_t})
\end{equation}
Here term $t\in T$ is specified by a subset of variables $A_t\subseteq[d]$
and a function $f_t:\{0,1\}^{A_t}\rightarrow\mathbb R\cup\{+\infty\}$ of $|A_t|$ variables.
Vector $x_{A_t}\in\mathbb R^{A_t}$ is the restriction of vector $x\in\mathbb R^d$ to $A_t$.
The arity $|A_t|$ of function $f_t$ can be arbitrarily large, however 
we assume the existence of an efficient {\em min-oracle} that for a given vector $\lambda\in\mathbb R^{A_t}$ computes 
$x\in\argmin\limits_{x\in \dom f_t} \left[f_t(x)+\langle \lambda,x\rangle\right]$
together with the cost $f_t(x)$,
where  $\dom f_t=\{x\in\{0,1\}^{A_t}\:|\:f_t(x)<+\infty\}\ne\varnothing$ is the effective domain of $f_t$.
It will be convenient to denote
$$
h_t(\lambda)\!=\!\!\!\!\!\min\limits_{\stackrel{}{x\in \dom f_t}}\!\!\!\!\! \left[f_t(x)+\langle \lambda,x\rangle\right]
\!=\!\min\limits_{y \in \SY_t} \langle y, [\lambda\;1] \rangle
\!=\!\min\limits_{y \in \mathbb Y_t} \langle y, [\lambda\;1] \rangle
$$
where subsets $\SY_t,\Y_t\subseteq [0,1]^{A_t}\otimes\mathbb R$ are defined as follows: 
$$
\SY_t=\{[x\; f_t(x)]\::\:x\in \dom f_t\}\;\quad \Y_t=\conv(\SY_t)
$$
The assumption means that we can efficiently compute a supergradient of concave function $h_t(\lambda)$
at a given $\lambda\in\mathbb R^{A_t}$.

Since~\eqref{eq:decomposition} is in general an NP-hard problem,
our goal will be to solve a certain convex relaxation of~\eqref{eq:decomposition},
which will turn out to be equivalent to the {\em Basic LP relaxation} (BLP) of~\eqref{eq:decomposition}~\cite{kolmogorov15:power}.
This relaxation has been widely studied in the literature, especially for the MAP-MRF problem
(in which case it is usually called the {\em local polytope} relaxation~\cite{WainwrightJordan08,Werner07}).
We emphasize, however, that our methodology is different from most previous works:
before applying the BLP relaxation, we represent the objective as a function of Boolean
indicator variables. This allows expressing complicated combinatorial constraints
such as those in multi-label discrete tomography and graph matching problems (see Section~\ref{sec:applications}).

\myparagraph{Lagrangean relaxation}
For a vector $y\in \Y_t$ let us denote the first $|A_t|$ components as $y_\star\in[0,1]^{A_t}$
and the last one as $y_\circ\in\mathbb R$ (so that $y=[y_\star\; y_\circ]$). 
We also denote $\SY=\bigotimes_{t\in  T} \SY_t$ and $\Y=\bigotimes_{t\in  T} \Y_t=\conv(\SY)$. The $t$-th component of vector $y\in\Y$
will be denoted as $y^t\in \Y_t$.
Problem~\eqref{eq:decomposition} can now be equivalently written as
\begin{equation}
  \label{eq:decomposition1}
  \min_{\substack
{ y \in \Y\;,\; x\in\{0,1\}^d
\\y^t_\star = x_{A_t}\; \forall t\in T}
} \;\;
\sum_{t\in T} y^t_\circ
\end{equation}
We form the relaxation of~\eqref{eq:decomposition1} by removing the non-convex constraint $x\in\{0,1\}^d$:
\begin{equation}
  \label{eq:decomposition1:relaxation}
  \min_{\substack
{ y \in \Y\;,\; x\in\mathbb R^d
\\y^t_\star = x_{A_t}\; \forall t\in T}
} \;\;
\sum_{t\in T} y^t_\circ
\end{equation}
It can be shown that problem~\eqref{eq:decomposition1:relaxation} is equivalent to the BLP relaxation of~\eqref{eq:decomposition}, see~\cite{swoboda:2018arxiv}.

We will not directly optimize this relaxation, but its Lagrangean dual~\cite{SonGloJaa_optbook}.
For each equality constraint $y^t_\star=x_{A_t}$ we
introduce Lagrange multipliers $\lambda^t\in\mathbb R^{A_t}$. The collection
of these multipliers will be denoted as $\lambda\in\bigotimes_{t\in T}\mathbb R^{A_t}$.
The dual will be optimized over the set
\begin{equation}
\Lambda = \left\{ \lambda \:  
: \:
\sum\limits_{t\in T_i} \lambda^t_i = 0 \;\;\; \forall i\in[d]\right\}
\end{equation}
where we denoted 
$$T_i=\{t\in T\::\:i\in A_t\}\,.$$

\begin{proposition}
\label{prop:dual-of-decomposition}
The dual of~\eqref{eq:decomposition1:relaxation} w.r.t.\ the equality constraints $y^t_\star = x_{A_t}$ is
\begin{equation}
  \label{eq:dual-decomposition}
  \max_{ \lambda \in \Lambda} h(\lambda), \quad h(\lambda) := \sum_{t\in T} h_t(\lambda^t)
\end{equation}
Furthermore, the optimal values of problems~\eqref{eq:decomposition1:relaxation} and~\eqref{eq:dual-decomposition}
coincide. (This value can be $+\infty$, meaning that~\eqref{eq:decomposition1:relaxation} is infeasible and \eqref{eq:dual-decomposition} is unbounded).
\end{proposition}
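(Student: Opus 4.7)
The plan is to derive the dual function explicitly by writing out the Lagrangian, minimizing out $x$ and $y$, and then invoking linear-programming duality to establish equality of the optimal values (including the $+\infty$ case).

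First, I would attach multipliers $\lambda^t\in\R^{A_t}$ to each equality $y^t_\star=x_{A_t}$ and form
$$L(y,x,\lambda)=\sum_{t\in T}y^t_\circ+\sum_{t\in T}\langle \lambda^t,\, y^t_\star-x_{A_t}\rangle.$$
Regrouping the $x$-dependent piece as $\sum_{i\in[d]}x_i\sum_{t\in T_i}\lambda^t_i$ and observing that $y^t_\circ+\langle\lambda^t,y^t_\star\rangle=\langle y^t,[\lambda^t\;1]\rangle$, minimization over unconstrained $x\in\R^d$ yields $-\infty$ unless $\sum_{t\in T_i}\lambda^t_i=0$ for every $i\in[d]$, i.e.\ unless $\lambda\in\Lambda$. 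For $\lambda\in\Lambda$ the $x$-dependent term drops, and the remaining minimization over $y\in\Y=\bigotimes_{t\in T}\Y_t$ factorizes across $t$, giving exactly $\sum_{t\in T}h_t(\lambda^t)=h(\lambda)$ by the definition of $h_t$ in the excerpt. Hence the dual function equals $h(\lambda)$ on $\Lambda$ and $-\infty$ elsewhere, producing~\eqref{eq:dual-decomposition}.

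For equality of the optima I would invoke linear-programming duality. Since each $\dom f_t$ is finite and non-empty, each $\Y_t=\conv(\SY_t)$ is a (non-empty) polytope, so $\Y$ is a polytope and~\eqref{eq:decomposition1:relaxation} is a linear program: linear objective, linear equality constraints, feasible set a polyhedron in $(y,x)$. The objective $\sum_{t\in T}y^t_\circ$ is bounded below on the compact set $\Y$, so the primal value is either finite or $+\infty$ (infeasible), never $-\infty$. The dual is always feasible because $0\in\Lambda$. LP strong duality therefore forces equality whenever the primal is feasible; and when the primal value is $+\infty$, feasibility of the dual combined with LP duality (a primal LP is infeasible iff its feasible dual is unbounded) forces $\sup_{\lambda\in\Lambda} h(\lambda)=+\infty$, matching the primal.

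The main subtlety to watch for will be the infeasible/unbounded case: general convex duality can fail without a constraint qualification (no Slater point is available since the constraints are equalities), but here it is the polyhedral LP structure that saves us, together with the two elementary observations that $0\in\Lambda$ (ruling out dual infeasibility) and that $\Y$ is compact (ruling out a primal value of $-\infty$). Once those are in place, classical LP strong duality closes the argument.
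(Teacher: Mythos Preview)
Your proposal is correct and follows essentially the same route as the paper: form the Lagrangian, minimize out $x$ to obtain the constraint $\lambda\in\Lambda$, factorize the minimization over $y\in\Y$ into the terms $h_t(\lambda^t)$, and then invoke LP strong duality using the feasibility of $\lambda=0$. Your treatment of the infeasible/$+\infty$ case is slightly more explicit than the paper's (which simply cites LP duality under feasibility of one side), and there is a harmless sign slip in your regrouped $x$-term, but neither affects the argument.
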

Next, we describe how we maximize function~$h(\lambda)$.

\myparagraph{Proximal term}
We have a non-smooth concave maximization problem in $\lambda$, hence algorithms that require a differentiable objective functions will not work.
In proximal bundle methods~\cite{ProximityControlBundleMethodsKiwiel} an additional proximal term is added.
This results in the new objective
\begin{equation}
  \label{eq:dual-decomposition-trust-region}
  \max_{ \lambda \in \Lambda} 
  h_{\mu,c}(\lambda), \quad h_{\mu,c}(\lambda) :=
    h(\lambda)
  -
  \frac{1}{2 c}   \norm{\lambda - \mu}^2
\end{equation}
for a center point $\mu\in\Lambda$.
The proximal quadratic terms act as a trust-region term in the vicinity of $\mu$ and make the function strongly concave, hence smoothing the dual.
A successively refined polyhedral approximation~\cite{Lemarechal1986Bundle} is typically used for solving~\eqref{eq:dual-decomposition-trust-region}.
We develop a proximal method that will alternate between minimizing~\eqref{eq:dual-decomposition-trust-region} with the help of a multi-plane
block coordinate Frank-Wolfe method and updating the proximal center $\mu$.

\subsection{Maximizing $h_{\mu,c}(\lambda)$: BCFW algorithm}
Objectives similar to~\eqref{eq:dual-decomposition-trust-region}
 (without the summation constraint on the $\lambda$-variables) are used for training structural Support Vector Machines (SSVMs).
Following~\cite{BCFW,MP-BCFW,Osokin:ICML16}, we use a block-coordinate Frank-Wolfe algorithm (BCFW)
applied to the dual of~\eqref{eq:dual-decomposition-trust-region}, more specifically its multi-plane version MP-BCFW~\cite{MP-BCFW}.
The dual of~\eqref{eq:dual-decomposition-trust-region} is formulated below.
\begin{proposition}
\label{prop:dual-of-decomposition-plus-trust-region}
The dual problem to $\max_{\lambda \in \Lambda} h_{\mu,c}(\lambda)$ is
  \begin{multline}
  \label{eq:dual-of-decomposition-plus-trust-region}
    \min\limits_{y\in\Y }
    f_{\mu,c}(y),\\
f_{\mu,c}(y) :=
    \max\limits_{\lambda \in \Lambda}
\sum_{t\in T} 
\left[        \langle y^t, [ \lambda^t\; 1] \rangle
     - \frac{1}{2c} \norm{\lambda^t - \mu^t}^2 \right]
  \end{multline}
Define $\nu \in \R^d$ by
$
\nu_i = \frac{1}{|T_i|} \sum_{t\in T_i} (c \cdot y^t_i + \mu^t_i)
$
for $i \in [d]$.
Then the optimal $\lambda$ in~\eqref{eq:dual-of-decomposition-plus-trust-region} is
$
\lambda^t  = c \cdot y^t_\star + \mu^t  - \nu_{A_t} 
$
and 
\begin{eqnarray}
f_{\mu,c}(y) &=& \sum_{t\in T}\left( \frac{c}{2} \norm{y^t_\star}^2 + \langle y^t, [\mu^t\; 1]\rangle \right)
- \sum_{i=1}^d \frac{|T_i|}{2c}\nu_i^2 \nonumber \\
 \nabla_t f_{\mu,c}(y) &=& [\lambda^t\; 1]
\end{eqnarray}
where $\nabla_t$ denotes the derivative w.r.t.\ variables $y^t$.
\end{proposition}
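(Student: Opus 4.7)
The plan is to derive everything from the saddle-point reformulation, then solve the inner maximization explicitly.

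First, I would use the definition $h_t(\lambda^t) = \min_{y^t\in\Y_t}\langle y^t,[\lambda^t\;1]\rangle$ to rewrite
\[
\max_{\lambda\in\Lambda} h_{\mu,c}(\lambda)
= \max_{\lambda\in\Lambda}\min_{y\in\Y}\;L(\lambda,y),\qquad
L(\lambda,y)=\sum_{t\in T}\!\Bigl[\langle y^t,[\lambda^t\;1]\rangle-\tfrac{1}{2c}\|\lambda^t-\mu^t\|^2\Bigr].
\]
The set $\Y$ is a product of convex hulls of finite sets, hence compact and convex, while $\Lambda$ is a closed affine subspace. Since $L$ is affine in $y$ and strictly concave in $\lambda$ (and coercive in $\lambda$ on $\Lambda$, so the inner max is attained), Sion's minimax theorem applies and I can swap $\max$ and $\min$, identifying the right-hand side with $\min_{y\in\Y} f_{\mu,c}(y)$ and establishing strong duality.

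Next, I would solve the inner unconstrained concave quadratic $\max_{\lambda\in\Lambda} L(\lambda,y)$ for fixed $y$ by introducing a Lagrange multiplier $\nu_i$ (suitably rescaled by a factor $c$ to match the stated normalization) for each linear constraint $\sum_{t\in T_i}\lambda^t_i=0$. Using $\sum_i\nu_i\sum_{t\in T_i}\lambda^t_i=\sum_t\langle\nu_{A_t},\lambda^t\rangle$, the Lagrangian separates across $t$, and setting $\nabla_{\lambda^t}=0$ yields
\[
\lambda^t \;=\; c\,y^t_\star+\mu^t-\nu_{A_t}.
\]
Plugging this into the primal constraint $\sum_{t\in T_i}\lambda^t_i=0$ produces the stated formula $\nu_i=\tfrac{1}{|T_i|}\sum_{t\in T_i}(c\,y^t_i+\mu^t_i)$.

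To obtain the closed form for $f_{\mu,c}(y)$, I would substitute $\lambda^t-\mu^t=c\,y^t_\star-\nu_{A_t}$ back into $L(\lambda,y)$; after expansion the cross terms $\pm\langle y^t_\star,\nu_{A_t}\rangle$ cancel and the $\|y^t_\star\|^2$ coefficients combine to $c/2$, leaving
\[
f_{\mu,c}(y)=\sum_{t\in T}\Bigl(\tfrac{c}{2}\|y^t_\star\|^2+\langle y^t,[\mu^t\;1]\rangle\Bigr)-\sum_{t\in T}\tfrac{1}{2c}\|\nu_{A_t}\|^2,
\]
and the double sum $\sum_t\|\nu_{A_t}\|^2=\sum_i|T_i|\nu_i^2$ gives the claimed form. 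For the gradient I would invoke Danskin's/envelope theorem: since $\lambda$ maximizes the inner problem at $y$, $\nabla_{y^t}f_{\mu,c}(y)$ equals the partial of $\langle y^t,[\lambda^t\;1]\rangle$ at the optimal $\lambda^t$, namely $[\lambda^t\;1]$. As a sanity check, differentiating the closed form directly gives $\partial_{y^t_i}f_{\mu,c}(y)=c\,y^t_i+\mu^t_i-\nu_i$ (the chain-rule contribution from $\nu_i^2$ exactly produces the $-\nu_i$ term) and $\partial_{y^t_\circ}f_{\mu,c}(y)=1$, reproducing $[\lambda^t\;1]$.

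The main obstacle is the rigorous justification of the minimax exchange, since $\Lambda$ is unbounded; I would handle this by noting that the strong concavity in $\lambda$ and the compactness of $\Y$ let one restrict to a compact sublevel set of $\lambda$ without affecting either value, after which Sion's theorem applies verbatim. The algebraic bookkeeping with the index set $T_i$ is straightforward once the Lagrange multiplier is rescaled consistently with the proposition's normalization.
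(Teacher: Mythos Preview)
Your proposal is correct and follows essentially the same route as the paper: rewrite $h_{\mu,c}$ as a saddle-point problem, swap $\max$ and $\min$, solve the resulting equality-constrained concave quadratic in $\lambda$, and substitute back. In fact you supply more rigor than the paper does---the paper simply performs the $\max$/$\min$ swap without comment, states the maximizer $\lambda^t$ directly (as the solution of a quadratic under linear equalities), and asserts the gradient formula, whereas you justify the swap via Sion's theorem, derive $\lambda^t$ and $\nu$ through Lagrange multipliers, and obtain the gradient via Danskin's theorem with a direct check.
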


Next, we review and adapt to our setting BCFW and MP-BCFW algorithms for minimizing
function $f_{\mu,c}(y)$ over $y\in\Y$.
We will also describe a practical improvement to the implementation, namely compact representation of planes,
and discuss the problem of estimating the duality gap.

\myparagraph{BCFW~\cite{BCFW}}
The algorithm maintains feasible vectors $y\in\Y$.
At each step BCFW tries to decrease the objective $f_{\mu,c}(y)$ by updating
component $y^t$ for a chosen term $t$ (while maintaining feasibility).
To achieve this, it first linearizes the objective by using the Taylor expansion around the current point:
\begin{equation}
f_{\mu,c}(z) 
\approx  \langle \nabla f_{\mu,c}(y),z \rangle+{\tt const}\,.
\end{equation}
The optimal solution $z^t \in \Y_t$ of the linearized objective is computed by calling the $t$-th oracle:
$z^t \leftarrow\argmin\limits_{z^t \in \SY_t} \langle \nabla_t f_{\mu,c}(y),z^t \rangle$.
The new vector $y$ is obtained as the best interpolation of $y^t$ and $z^t$ with all other components $s \neq t$ fixed to $y^s$,
i.e.\ $y^s(\gamma) \leftarrow \left\{ \begin{array}{ll} y^s, & s \neq t \\ (1-\gamma)y^t + \gamma z^t,& s = t \end{array} \right.$.
The step size $\gamma \in [0,1]$ is chosen to minimize the objective.
The optimal $\gamma$ can be easily computed in closed form (see Lemma~\ref{lemma:step-size-BCFW} in~\cite{swoboda:2018arxiv}). 
One pass of BCFW is summarized in Algorithm~\ref{alg:BCFW}.
To avoid the expensive recomputation of the sum $\nu$ in Prop.~\ref{prop:dual-of-decomposition-plus-trust-region} 
needed for computing the gradient and the step size, it is held and updated explicitly in Algorithm~\ref{alg:BCFW}.

\begin{algorithm}[t]\small
  \caption{One pass of BCFW. Input: vectors $y\in\Y$, $\mu\in\Lambda$ and $\nu\in\mathbb R^d$ computed as in Prop.~\ref{prop:dual-of-decomposition-plus-trust-region}.
}\label{alg:BCFW}
\begin{algorithmic}[1]
  \STATE {\bf for each} $t\in T$ {\bf do} in a random order

  \STATE~~set $\lambda^t= c \cdot y^t_\star + \mu^t - \nu_{A_t}$ 

  \STATE~~call $t$-th oracle for $\lambda^t$: \ \ $z^t\leftarrow\argmin\limits_{z^t \in \SY_t} \langle z^t,[\lambda^t\; 1] \rangle$
\\ ~~ \big(i.e.\ let $x\leftarrow\argmin\limits_{x\in\SX_t}[f_t(x)+\langle\lambda^t,x\rangle]$ and $z^t=[x\;f_t(x)]$\big)
  \STATE~~interpolate $y(\gamma)^s \leftarrow \left\{ \begin{array}{ll} y^s, & s \neq t \\ (1-\gamma)y^t + \gamma z^t,& s = t \end{array} \right.$
  \STATE~~compute $\gamma\leftarrow \argmin_{\gamma\in[0,1]} f_{\mu,c}(y(\gamma))$: \\ ~~set 
$\gamma\leftarrow
\frac{\langle [\lambda^t\; 1] , z^t - y^t \rangle}
{c \norm{y^t_\star - z^t_\star}^2}
$
      and clip $\gamma$ to $[0,1]$ 
  \STATE~~set $\nu_i \leftarrow \nu_i + \frac{c}{|T_i|} (y(\gamma)^t_i - y^t_i)$ for $i \in A_t$ and $y^t \leftarrow y(\gamma)^t$ \!\!\!\!
  \STATE {\bf end for}
\end{algorithmic}
\end{algorithm}

\myparagraph{MP-BCFW~\cite{MP-BCFW}} In this paper we use the multi-plane version of BCFW.
This method caches planes~$z^t$ returned by min-oracles for terms $h_t(\lambda^t)=\min_{z^t \in \SY_t} \langle z^t, [\lambda^t\ 1] \rangle$.
Let $\tilde \SY_t\subset \SY_t$ be the set of planes currently stored in memory for the $t$-th subproblem. 
It defines an approximation
$\tilde h_t(\lambda^t)=\min_{z^t \in \tilde\SY^t} \langle z^t, [\lambda^t \ 1] \rangle$ of term 
$h_t(\lambda^t)$. 
Note that $\tilde h_t(\lambda^t)\ge h_t(\lambda^t)$ for any~$\lambda^t$.

MP-BCFW uses {\em exact passes} (that call the ``exact'' oracle for $h_t(\lambda^t)$ in line 3 of Algorithm~\ref{alg:BCFW})
and {\em approximate passes} (that call the ``approximate'' oracle for $\tilde h_t(\lambda^t)$).
One MP-BCFW iteration consists of one exact pass followed by several approximate passes.
The number of approximate passes is determined automatically by monitoring how
fast the objective decreases per unit of time.
Namely, the method computes the ratio
$\frac{f_{\mu,c}(y^\circ)-f_{\mu,c}(y)}{\Delta t}$ where $y^\circ$ is the vector at the beginning of the MP-BCFW iteration,
$y$ is the current vector and $\Delta t$ is the time passed since the beginning of the MP-BCFW iteration.
If this ratio drops after an approximate pass then the iteration terminates,
and a new MP-BCFW iteration is started with an exact pass.
The number of approximate passes will thus depend on the relative speed of exact and approximate oracles.

Note that the time for an approximate oracle call is proportional to the size of the working set $|\tilde \SY_t|$.
The method in~\cite{MP-BCFW} uses a common strategy for controlling this size:
planes that are not used during the last $K$ iterations are removed from $\SY_t$.
We use $K=10$, which is the default parameter in~\cite{MP-BCFW}.

\myparagraph{Compact representation of planes}
Recall that planes in the set $\tilde\SY_t$ have the form
$[x\;f_t(x)]$ for some $x\in\dom f_t\subseteq\{0,1\}^{A_t}$.
A naive approach (used, in particular, in previous works~\cite{MP-BCFW,Osokin:ICML16})
is to store them explicitly as vectors of size $|A_t|+1$.
We observe that in some applications a special structure of $\SX_t$ 
allows storing and manipulating these planes more efficiently.
For example, in MAP-MRF inference problems a variable with $k$ possible
values can be represented by a single integer, rather than $k$ indicator variables.

In our implementation we assume that each vector $x\in\dom f_t$ can be represented
by an object $s$ in a possibly more compact space $\CX_t$. To specify term $f_t$,
the user must provide an array $\map_t:[|A_t|]\rightarrow[d]$ that determines
set $A_t\subseteq[d]$ in a natural way, specify the size of objects $s\in\CX_t$, and implement the following functions:
\begin{itemize}
\item[F1.] A bijection $\sigma_t:\CX_t\rightarrow\SX_t$.

\item[F2.] Min-oracle that for a given $\lambda_t$ computes $x\in\argmin\limits_{x\in\dom f_t}[f_t(x)+\langle\lambda^t,x\rangle]$
and returns its compact representation $s$ (i.e.\  $\sigma_t(s)\!=\!x$) together with the cost~$f_t(x)$.

\item[F3.]  A function that computes inner product $\langle  \lambda^t, \sigma_t(s) \rangle$
 for a given vector $\lambda^t$ and object $s\in\CX_t$.
\end{itemize}

Note, 
calling the approximate oracle in line 3 involves calling the function in (F3) $|\tilde\SY_t|$ times; 
it typically takes $O(|\tilde\SY_t|\cdot {\tt size}_t)$ time where ${\tt size}_t$ is the length of the array for storing $s\in\CX_t$.

\begin{remark}
The efficient plane storage mechanism gives roughly a $25\%$ speedup of a single MP-BCFW pass on the \textbf{protein-folding} MRF dataset (see Sections~\ref{sec:applications} and~\ref{sec:experiments}).
Moreover, it typically results in a slightly better objective value obtained after each MP-BCFW pass, since more approximate passes can be done before an exact pass is called (since approximate passes are accelerated by the compact plane storage, their objective decrease per unit of time is higher, hence they are called more often).
\end{remark}

\subsection{Algorithm's summary}
Recall that our goal is to maximize function $h(\lambda)$ over $\lambda\in\Lambda$;
this gives a lower bound on the original discrete optimization problem $\min_{x\in\SX}f(x)$.
We now summarize our algorithm for solving $\max_{\lambda\in\Lambda}h(\lambda)$,  and describe our choices of parameters.
To initialize, we set $\mu^t={\bf 0}$, $y^t\leftarrow\argmax_{y^t\in\SY_t}\langle y^t,[\mu^t\; 1]\rangle$ and $\tilde\SY=\{y^t\}$ for each $t\in T$.
Then we start the main algorithm. After every 10 iterations of MP-BCFW
we update $\mu\leftarrow\lambda^\ast$ (keeping vectors $y^t$ and sets $\tilde\SY_t$ unchanged),
where $\lambda^\ast$ is the vector with the largest value of objective
$h(\lambda^\ast)$ seen so far. Since evaluating $h(\lambda)$ 
is an expensive operation, we do it for the current vector $\lambda$ only after every 5 iterations of MP-BCFW.
\begin{remark}[Convergence]
If we evaluated the inner iterations in MP-BCFW exactly, our method would amount to the proximal point algorithm which is convergent~\cite{rockafellar1976monotone}.
Even with non-exact evaluation, convergence can be proved when the error in the evaluation of the proximal is shrinking fast enough towards zero.
However, we have use a more aggressive scheme that updates the proximal point every 10 iterations and does not bound the inexactness of the proximal step evaluation.
Experimentally, we have found that it gives good results w.r.t.\ the objective of the overall problem~\eqref{eq:dual-decomposition}.
\end{remark}

\subsection{Estimating duality gap}\label{sec:duality-gap}
To get a good stopping criterion, it is desirable to have
a bound on the gap $h(\lambda)-h(\lambda^\ast)$ between
the current and optimal objectives. This could be easily
done if we had feasible primal and dual solutions.
Unfortunately, in our case vector $y\in\Y$ is not a feasible
solution of problem~\eqref{eq:decomposition1:relaxation},
since it does not satisfy equality constraints $y^t_\star=x_{A_t}$. 
\footnote{
We say that vector $y$ is a feasible (optimal) solution of~\eqref{eq:decomposition1:relaxation} if there exists a vector $x\in\mathbb R^d$
so that $(y,x)$ is a feasible (optimal) solution of~\eqref{eq:decomposition1:relaxation}. Clearly, $x$ can be easily computed from feasible $y$,
so omit it for brevity.
}
To get a handle on the duality gap, we propose to use the following quantities:
\begin{equation}
\label{eq:duality-gap-quantities}
A_{y,\lambda}\!=\!\sum_{t\in T}\langle y^t,[\lambda^t\; 1]\rangle-h(\lambda) \;,\;
B_y\!=\!\sum_{i=1}^d \left[\max_{t\in T_i}y^t_i - \min_{t\in T_i}y^t_i\right]
\end{equation}
\begin{proposition}\label{prop:AB}
Consider pair $(y,\lambda)\in\Y\!\times\!\Lambda$. \\
(a) There holds $A_{y,\lambda}\ge 0$, $B_y\ge 0$
and 
\begin{equation}\label{eq:gap-bound}
h(\lambda^\ast)-h(\lambda)\le A_{y,\lambda} + B_y\cdot \norm{\lambda^\ast-\lambda}_{1,\infty}\quad\forall  \lambda^\ast\in\Lambda
\end{equation}
where we denoted $\norm{\delta}_{1,\infty}=\max_{i\in[d]} \sum_{t\in T_i}|\delta^t_i|$. \\
(b) We have $A_{y,\lambda}=B_y=0$ if and only if $y$ and $\lambda$ are optimal solutions of problems~\eqref{eq:decomposition1:relaxation}
and~\eqref{eq:dual-decomposition}, respectively.
\end{proposition}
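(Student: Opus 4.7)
The plan is to prove part (a) first --- where the inequality~\eqref{eq:gap-bound} is the substantive claim --- and then derive part (b) from (a) together with strong duality (Proposition~\ref{prop:dual-of-decomposition}). Non-negativity of $A_{y,\lambda}$ is immediate: since $y^t\in\Y_t$, we have $\langle y^t,[\lambda^t\;1]\rangle\ge h_t(\lambda^t)$ by the very definition of $h_t$ as a minimum over $\Y_t$, and summing over $t$ gives $A_{y,\lambda}\ge 0$. Non-negativity of $B_y$ is even more direct, as it is a sum of terms of the form $\max-\min\ge 0$.

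For the inequality I would first write
\[
h(\lambda^\ast)-h(\lambda) = \sum_{t}\bigl(h_t((\lambda^\ast)^t)-\langle y^t,[\lambda^t\;1]\rangle\bigr)+A_{y,\lambda},
\]
then bound $h_t((\lambda^\ast)^t)\le\langle y^t,[(\lambda^\ast)^t\;1]\rangle$ (again using $y^t\in\Y_t$) to reduce the right-hand side to $A_{y,\lambda}+\sum_t\langle y^t_\star,\delta^t\rangle$, where $\delta^t:=(\lambda^\ast)^t-\lambda^t$. Swapping sums to aggregate over indices $i\in[d]$, the key observation is that $\lambda,\lambda^\ast\in\Lambda$ jointly imply $\sum_{t\in T_i}\delta^t_i=0$ for every $i$, so one may subtract the constant $m_i:=\min_{t\in T_i}y^t_i$ from each $y^t_i$ without changing the inner sum. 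This rewrites $\sum_{t\in T_i}y^t_i\delta^t_i$ as $\sum_{t\in T_i}(y^t_i-m_i)\delta^t_i$, which is bounded by $(M_i-m_i)\sum_{t\in T_i}|\delta^t_i|$ with $M_i:=\max_{t\in T_i}y^t_i$. Summing over $i$ and factoring out $\max_i\sum_{t\in T_i}|\delta^t_i|=\|\lambda^\ast-\lambda\|_{1,\infty}$ yields the claimed bound.

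For part (b), the ``if'' direction applies~\eqref{eq:gap-bound}: with $A_{y,\lambda}=B_y=0$ the right-hand side vanishes for every $\lambda^\ast\in\Lambda$, so $\lambda$ is dual-optimal. Moreover $B_y=0$ forces $y^t_i$ to take a common value across $t\in T_i$, so setting $x_i$ to that value yields a primal-feasible pair $(y,x)$; combined with $\lambda\in\Lambda$ this gives the orthogonality identity $\sum_t\langle y^t_\star,\lambda^t\rangle=\sum_i x_i\sum_{t\in T_i}\lambda^t_i=0$, so $A_{y,\lambda}=0$ collapses to $\sum_t y^t_\circ=h(\lambda)$; matching the optimal dual value, this makes $y$ primal-optimal via Proposition~\ref{prop:dual-of-decomposition}. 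The ``only if'' direction reverses this: primal optimality implies feasibility hence $B_y=0$, and the same orthogonality identity together with strong duality then force $A_{y,\lambda}=0$.

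The main obstacle is the coordinate-wise bound in part (a): one must use dual feasibility of \emph{both} $\lambda$ and $\lambda^\ast$ locally at each index $i$ in order to recenter $y^t_i$ around $m_i$, which is what extracts $B_y$ (the spread of $y^t_i$) rather than something like $\sum_i M_i$ that would never vanish at optimum. Once that recentering trick is in place, the remaining steps are routine manipulations using weak/strong LP duality.
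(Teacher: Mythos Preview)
Your proposal is correct and follows essentially the same argument as the paper. The paper phrases things through the Lagrangian $L(y,x,\lambda)=\sum_t\langle y^t,[\lambda^t\;1]\rangle$, but the core of part~(a) is identical: bound $h(\lambda^\ast)\le\sum_t\langle y^t,[(\lambda^\ast)^t\;1]\rangle$, rewrite the difference as $\sum_t\langle y^t_\star,\delta^t\rangle$, use $\sum_{t\in T_i}\delta^t_i=0$ to subtract $\min_{t\in T_i}y^t_i$ coordinate-wise, and then bound by $(M_i-m_i)\cdot\|\delta\|_{1,\infty}$ before summing over $i$. For part~(b) the paper argues directly that $B_y=0$ is equivalent to primal feasibility and that, under feasibility, $A_{y,\lambda}=f(y)-h(\lambda)$ so $A_{y,\lambda}=0$ characterizes an optimal primal-dual pair via strong duality; your variant of deducing dual optimality of $\lambda$ from the already-proved bound~\eqref{eq:gap-bound} is an equivalent repackaging of the same content.
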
 
Note, if we knew that an optimal solution $\lambda^\ast\in\max_{\lambda\in\Lambda}h(\lambda)$
belongs to some bounded region then we could use~\eqref{eq:gap-bound}
to obtain a bound on the duality gap. Such region can be obtained for some applications,
but we did not pursue this direction.

\section{Applications}
\label{sec:applications}
In this section we give a detailed description of the three applications used in the evaluation:
Markov Random Fields (MRFs), discrete tomography and the graph matching problem.
The latter two are both extensions of the MAP-inference problems for MRFs.
Those three problems are reviewed below.

\subsection{Markov Random Fields}
An MRF consists of a graph $G=(V,E)$ and a discrete set of labels $\SX_v$ for each $v \in V$.
The goal in Maximum-A-Posteriori (MAP) inference is to find labeling $(x_v)_{v \in V} \in \bigotimes_{v\in V} \SX_v =: \SX$ that is minimal with respect to the potentials:
\begin{equation}
  \label{eq:MAP-inference}
  \min_{x \in \SX} f(x), \; f(x) := \sum_{v \in V} \theta_v(x_v) + \sum_{uv \in E} \theta_{uv}(x_u,x_v)\,.
\end{equation}
This problem is NP-hard for general graphs $G$, but can be 
solved efficiently for trees.
Hence, we choose a covering $G$ by trees, as done in~\cite{DualDecompositionKomodakis}.
Additionally, we seek a small number of trees, such that the number of Lagrangean variables stays small and optimization becomes faster.

\myparagraph{Arboricity}
A tree covering of a graph is called a \emph{minimal tree cover}, if there is no covering consisting of fewer trees.
The associated number of trees is called the graph's \emph{arboricity}.
We compute the graph's arboricity together with a minimal tree cover efficiently with the method~\cite{ForestsFramesAndGamesGabw1992}

\myparagraph{Boolean encoding}
To phrase the problem as an instance of~\eqref{eq:decomposition}, we encode labelings
$x\in\SX$ via indicator variables $x_{i;a}=[x_i=a]\in\{0,1\}$ for $i\in V,a\in\SX_i$
while adding constraints $\sum_a x_{i;a}=1$ (i.e.\ assigning infinite cost
to configurations that do not satisfy this constraint).

A tree cover and a Boolean encoding are also used for the two problems below;
we will not explicitly comment on this anymore.

\subsection{Discrete tomography}

\begin{figure}
\begin{minipage}{0.97\columnwidth}
\centering
\includegraphics[width=0.8\textwidth]{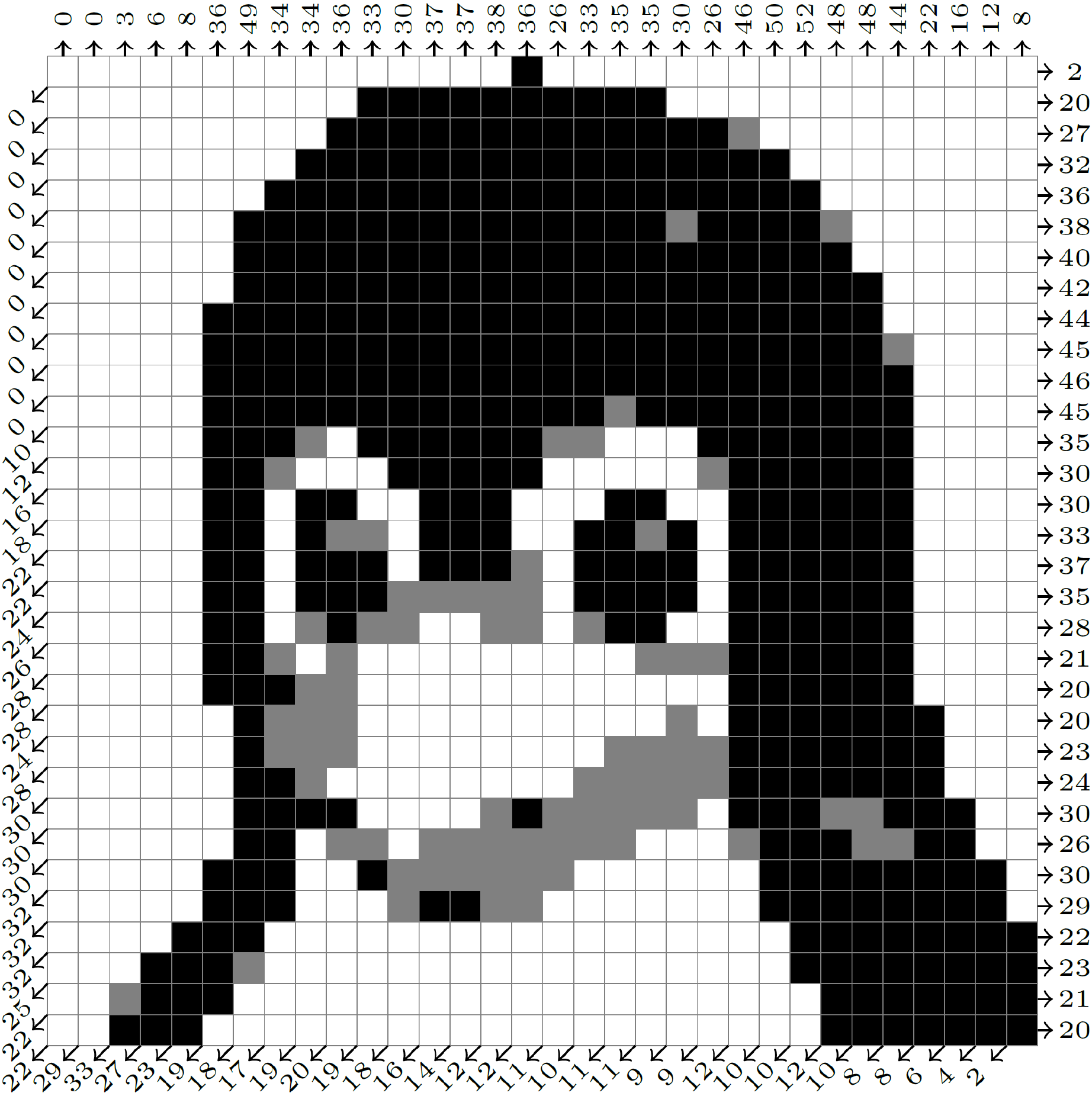} 
\caption{
Illustration of a discrete tomography problem.
Image intensity values are 0 (white), 1 (gray) and 2 (black).
Small arrows on the side denote the three tomographic projection directions (horizontal, vertical, and diagonal). 
Values at arrow heads denote the intensity value along tomographic projections.
}
\label{fig:dt-illustration}
\end{minipage}
\hspace{10pt}
\begin{minipage}{0.97\columnwidth}
\centering
\includegraphics[width=0.8\textwidth]{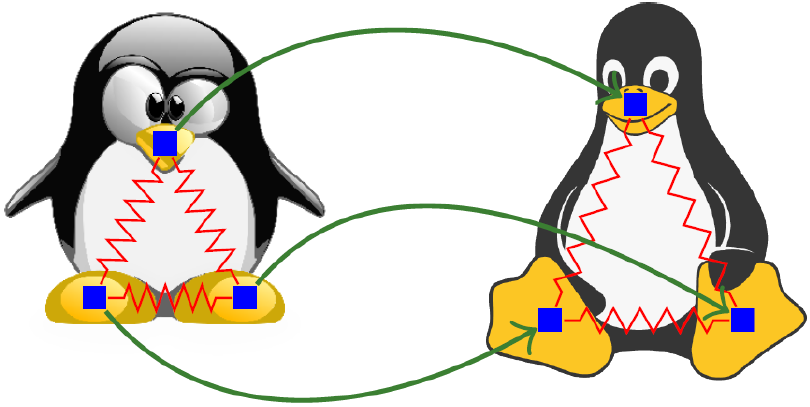} 
\caption{
Illustration of a graph matching problem matching nose and left/right feet of two penguins.
The \textcolor{blue}{blue nodes} on the left penguin correspond to the underlying node set $V$, while the \textcolor{blue}{blue nodes} on the right penguin correspond to the labels $\SL$.
The \textcolor{green}{green lines} denotes the matching. 
Note that no two labels are matched twice.
The \textcolor{red}{red springs} denote pairwise costs $\theta_{ij}$ that encourage geometric rigidity of the matching.
}
\label{fig:graph-matching}
\end{minipage}
\end{figure}

The discrete tomography problem is to reconstruct an image from a set of linear (tomographic) projections taken at different angles, where image intensity values are restricted to be from a discrete set of intensities.
See Figure~\ref{fig:dt-illustration} for an illustration.
The problem is ill-posed, since the number of linear projections is smaller than the number of pixels to reconstruct. 
Hence, we use a regularizer to penalize non-regular image structures.
Formally, we have an MRF $G = (V,E)$, where the node set corresponds to the pixel grid and the edges connect nodes which correspond to neighboring pixels.
The label space is $\SX_v = \{0,1,\ldots,k\}$ for some $k \in \N$.
Additionally, the labeling $x \in \SX$ must satisfy linear projections $Ax=b$ with $A \in \{0,1\}^{\abs{V} \times l}$.  
Usually, no local information is available, hence the unary potentials are zero: $\theta_v \equiv 0$ for $v \in V$.
The problem reads
\begin{equation}
  \label{eq:dt_energy}
  \min_{x \in \SX } f(x), \quad f(x) := \sum_{ij \in E} \theta_{ij}(x_i,x_j)
\end{equation}
where $\SX = \{x\in\{0,1,\ldots,k\}^V\::\:Ax=b\}$.
A typical choice for the pairwise potentials $\theta_{ij}$ is the truncated $L_1$-norm $\theta_{ij}(x_i,x_j) = \min(\abs{x_i-x_j},c)$. 
Each row of the projections $Ax=b$ forms another subproblem.
The $i$-th row of $Ax=b$ hence is of the form $\sum_{v \in V: A_{iv}=1} x_v = b_i$.
Efficient solvers for this problem were recently considered in~\cite{SSVM_DT_2017}.
We follow their recursive decomposition approach for the solution of the projection subproblems. Details are given below.

\paragraph{Discrete tomography subproblems}
We use a simplified version of the recursive decomposition approach of~\cite{SSVM_DT_2017} for efficiently solving the summation constraints $Ax = b$ of the discrete tomography problem.
Below we give the corresponding details.
Let the $i$-th row of $Ax=b$ be of the form $\sum_{v \in V: A_{iv}=1} x_v = b_i$ and recall that $x_v \in \{0,1,\ldots,k\}$.
Each such tomographic projection will correspond to a single subproblem.
Taking care of Lagrangean multipliers $\lambda$, we can rename variables and rewrite the problem as
\begin{equation}
\label{eq:summation-problem}
\min_{x_1,\ldots,x_n \in \{0,\ldots,k\}^n} \sum_{i=1}^n \sum_{l=0}^k \lambda_i(l) \cdot \eins_{x_i = l} \quad \text{s.t.} \quad \sum_{i=1}^n x_i = b
\end{equation}
We will follow a recursive decomposition approach.
To this end, we introduce helper \emph{summation variables} $s_{i:j} = \sum_{u=i}^j x_u$.
\paragraph{Variable partitions}
We partition the set $[1,n]$ into $\Pi_{1:\lfloor\frac{n}{2}\rfloor} = \{x_1,\ldots,x_{\lfloor\frac{n}{2}\rfloor} \}$ and $\Pi_{\lfloor\frac{n}{2}\rfloor+1:n} = \{x_{\lfloor\frac{n}{2}\rfloor+1},\ldots,x_n \}$.
We recursively partition $\Pi_{1:\lfloor\frac{n}{2}\rfloor}$ and $\Pi_{\lfloor\frac{n}{2}\rfloor+1:n}$ analoguously until reaching single variables.
This results in a tree with $\Pi_{1:n}$ as root.
\paragraph{Constraints on summation variables}
Whenever we have partitions $\Pi_{i:j} = \Pi_{i:k} \cup \Pi_{k+1:j}$, we add the constraint $s_{i:j} = s_{j:k} + s_{k+1:j}$.
\paragraph{Solving~\eqref{eq:summation-problem}}
We will propose a dynamic programming approach to solving~\eqref{eq:summation-problem}.
First, for each value $l$ of summation variable $s_{i:j}$ we store a value $\phi_{i:j}(l) \in \R$.
We compute $\phi_{i:j}$ recursively from leaves to the root.
For partitions $\Pi_{i:j} = \Pi_{i:k} \cup \Pi_{k+1:j}$ we compute 
\begin{equation}
\label{eq:message-computation}
\phi_{i:j}(l) = \min_{l' = 0,\ldots,l} \phi_{i:k}(l') + \phi_{k+1,j}(l-l') \quad \forall l\,.
\end{equation}
After having computed $\phi_{1:n}$, we set $s^*_{1:n} = b$.
Subsequently, we make a pass from root to leaves to compute the optimal label sum for each variable $s_{i:j}$ as follows: 
\begin{equation}
s^*_{i:k}, s^*_{k+1:j} = \min_{s_{i:k} + s_{k+1:j} = s^*_{i:j}} \phi_{i:k}(s_{i:k}) + \phi_{k+1:j}(s_{k+1:j})\,.
\end{equation}

\paragraph{Fast dynamic programming}
Naively computing~\eqref{eq:message-computation} needs $O(((j-i)\cdot k)^2)$ steps.
However, we use an efficient heuristic~\cite{FastMaximumConvolutionBussieck} that tends to have subquadratic complexity in practice.

\subsection{Graph matching}

The graph matching problem consists of finding a MAP-solution in an MRF $G=(V,E)$ where the labels for each node come from a common universe $\SX_v \subset \SL$ $\forall v \in V$.
The additional \emph{matching constraint} requires that two nodes cannot take the same label: $x_u \neq x_v$ $\forall  u,v \in V, u \neq v$.
Hence, any feasible solution defines an injective mapping into the set of labels.
For an illustration see Figure~\ref{fig:graph-matching}.
The problem is
\begin{equation}
  \label{eq:graph-matching}
  \min_{x \in \SX} f(x) \quad \text{ s.t. } \quad x_u \neq x_v \quad \forall u \neq v \,. 
\end{equation}
We use a minimum cost flow solver for handling the matching constraint, see e.g.~\cite{TorresaniEtAlQAP,HungarianBP,GraphMatchingMessagePassing} for an explanation of the minimum cost flow solver construction.

\vspace{-4pt}
\section{Experiments}
\label{sec:experiments}

\begin{figure*}[t]%
\centering%
\begin{tabular}{cccc}
\includegraphics[scale=0.22]{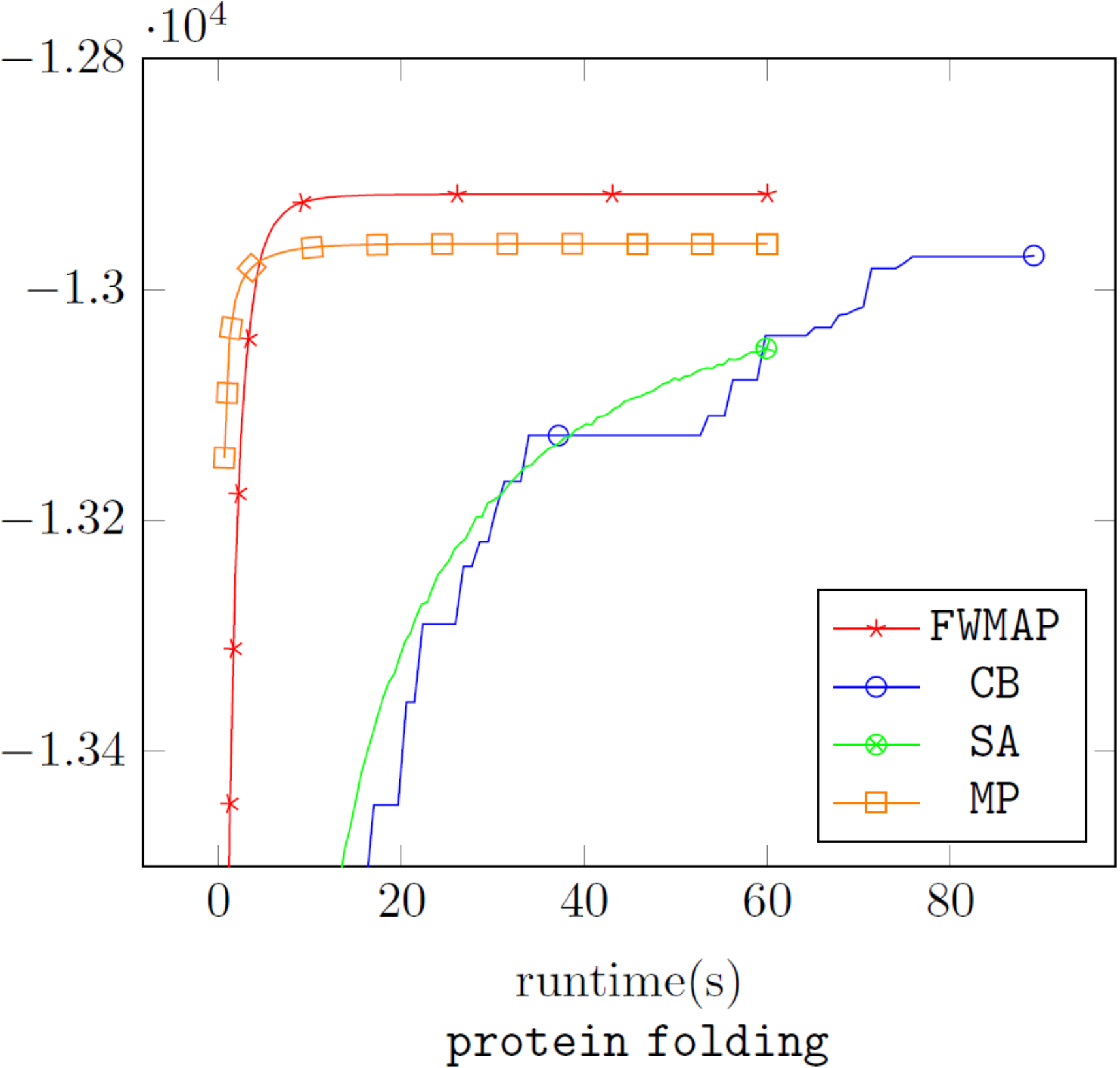} &%
\includegraphics[scale=0.22]{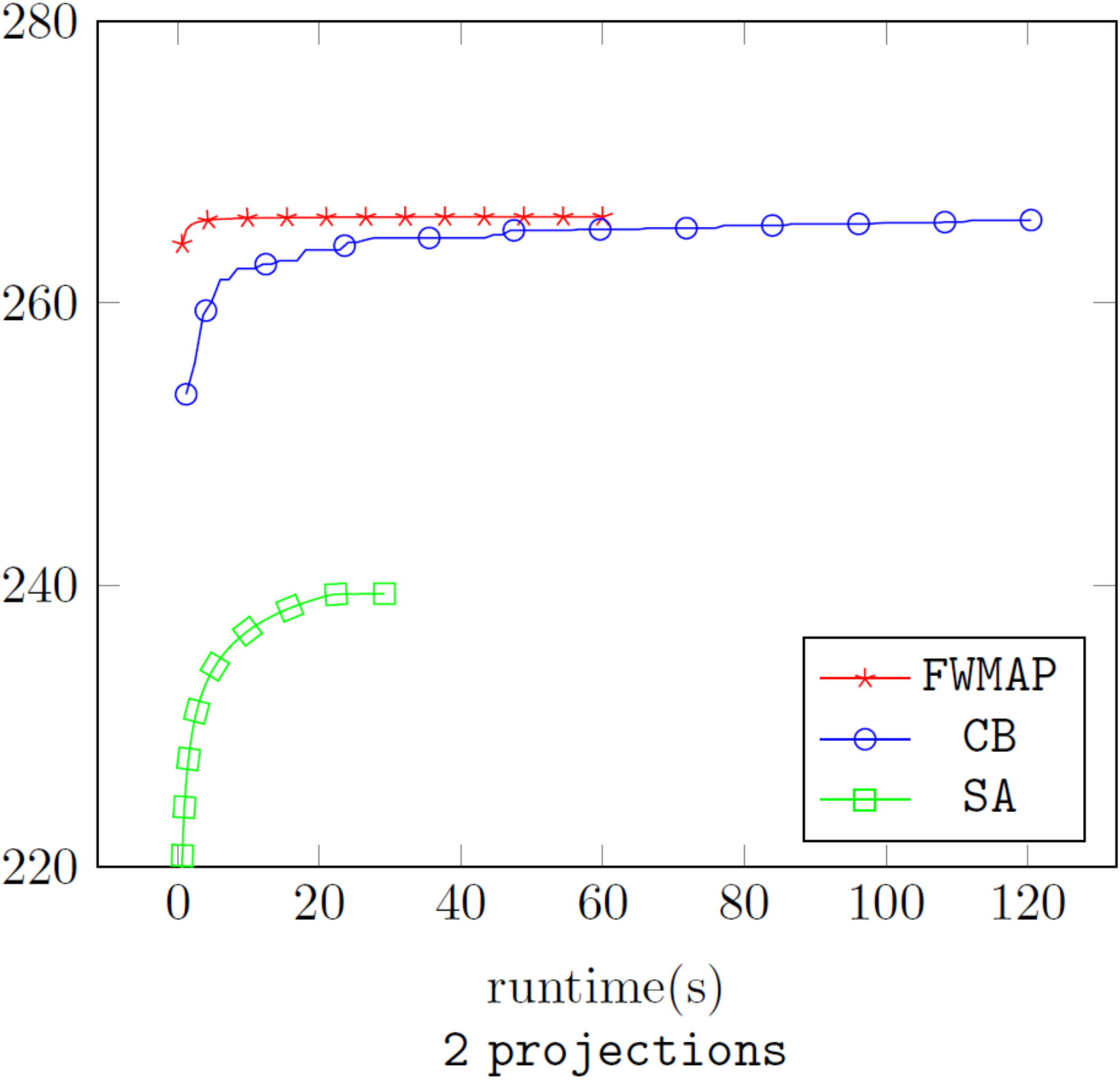} &%
\includegraphics[scale=0.22]{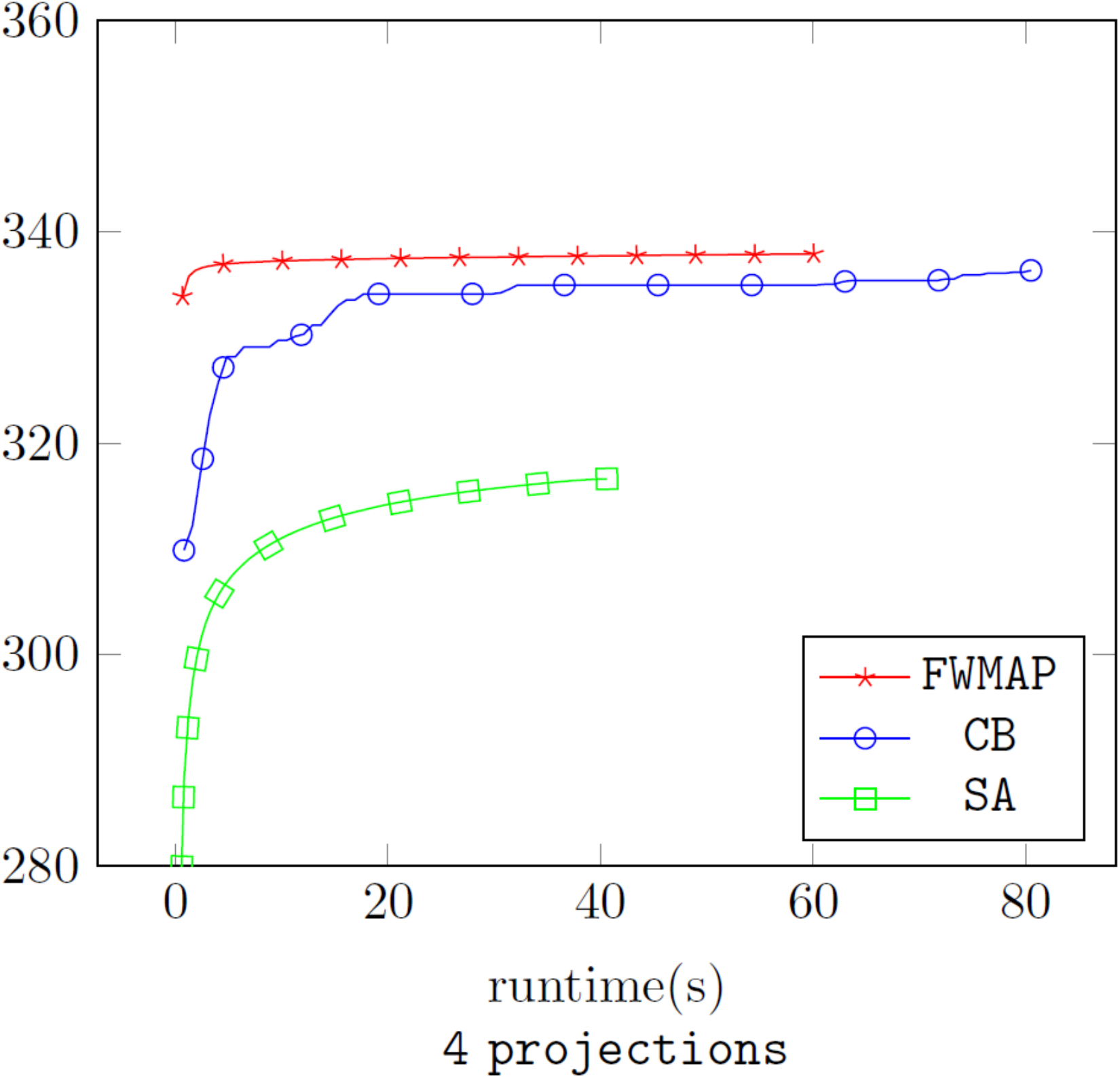} &%
\includegraphics[scale=0.22]{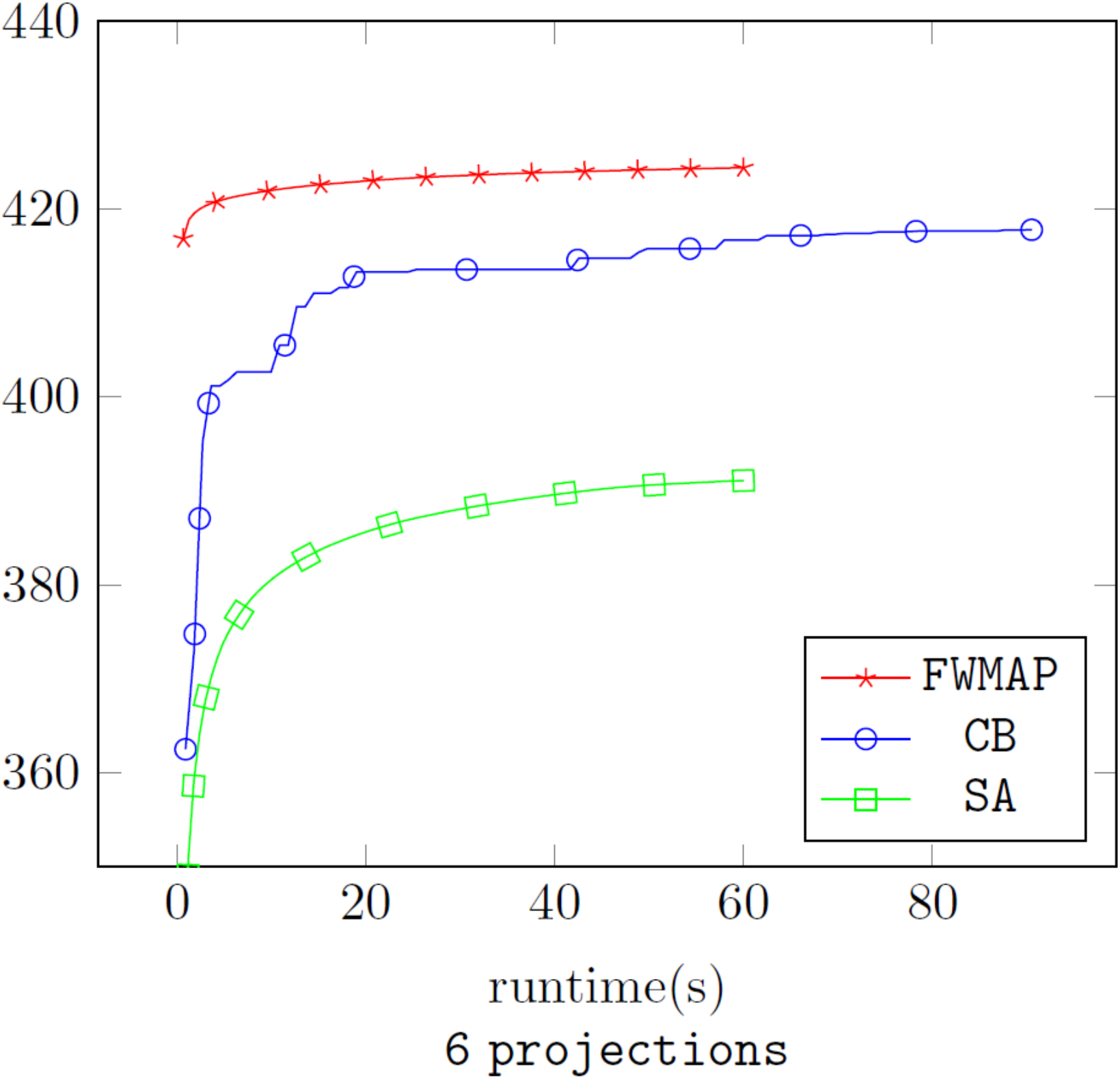} \\%
\includegraphics[scale=0.22]{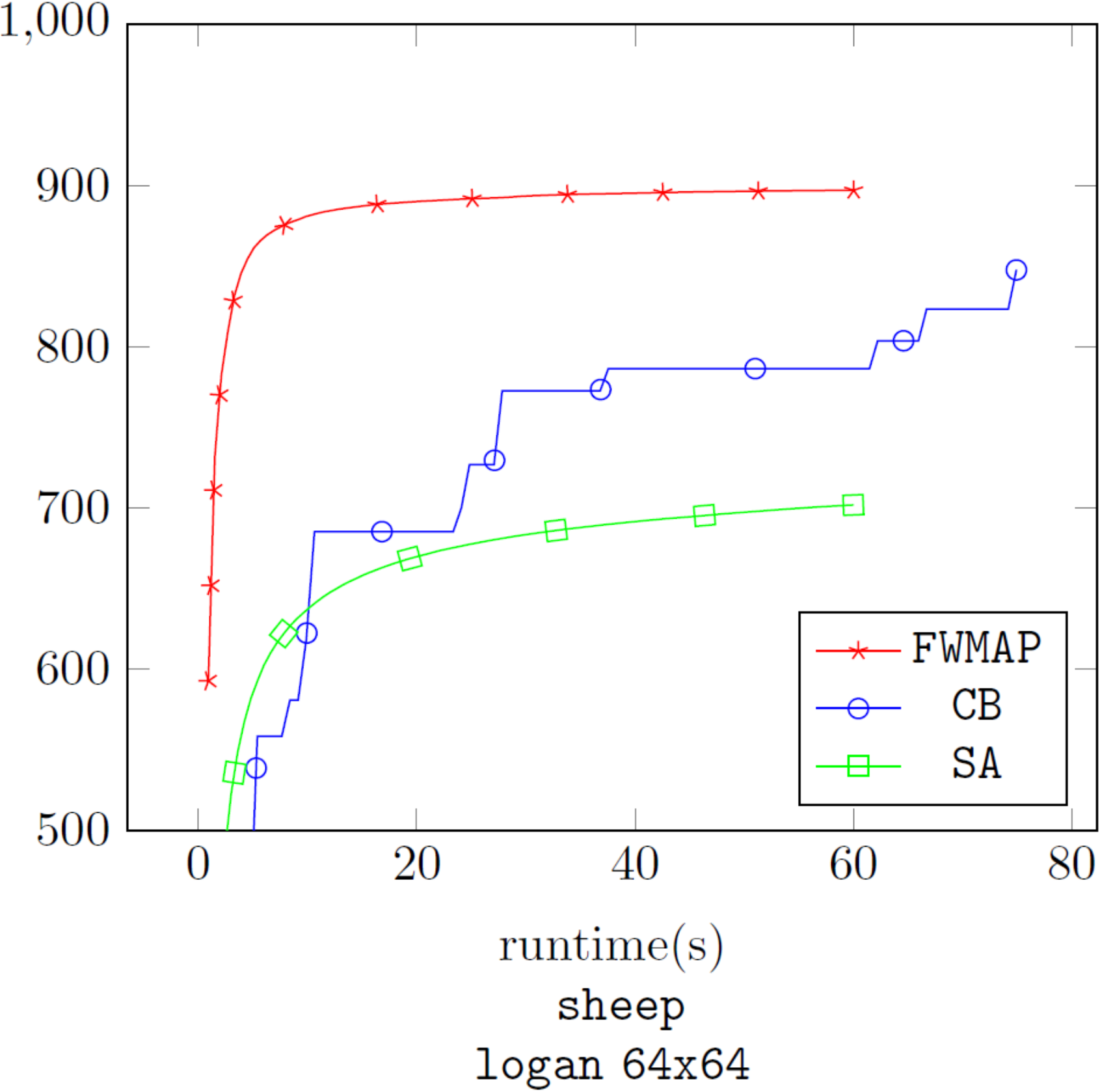} &%
\includegraphics[scale=0.22]{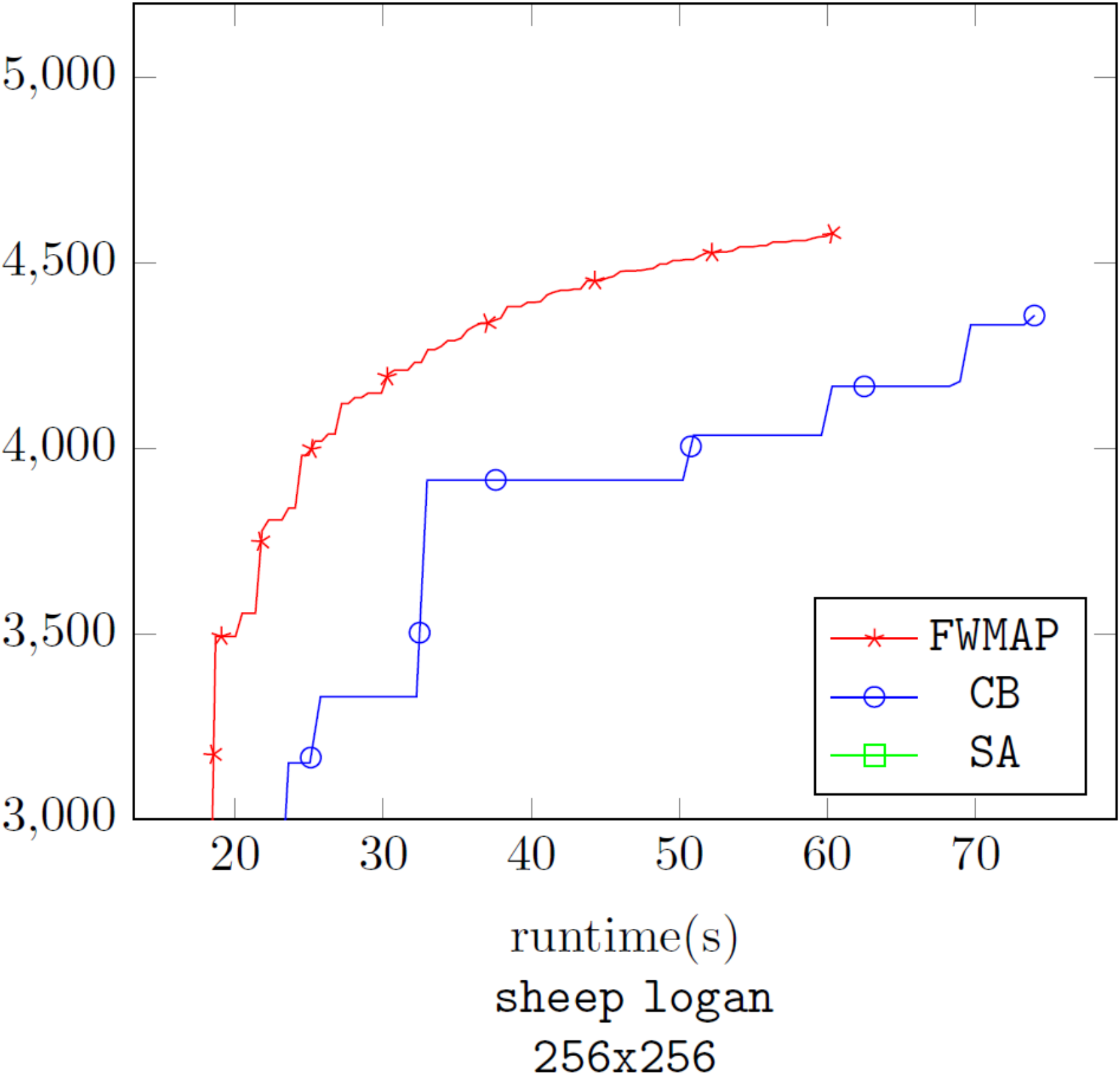} &%
    \smash{\raisebox{5pt}{
\includegraphics[scale=0.22]{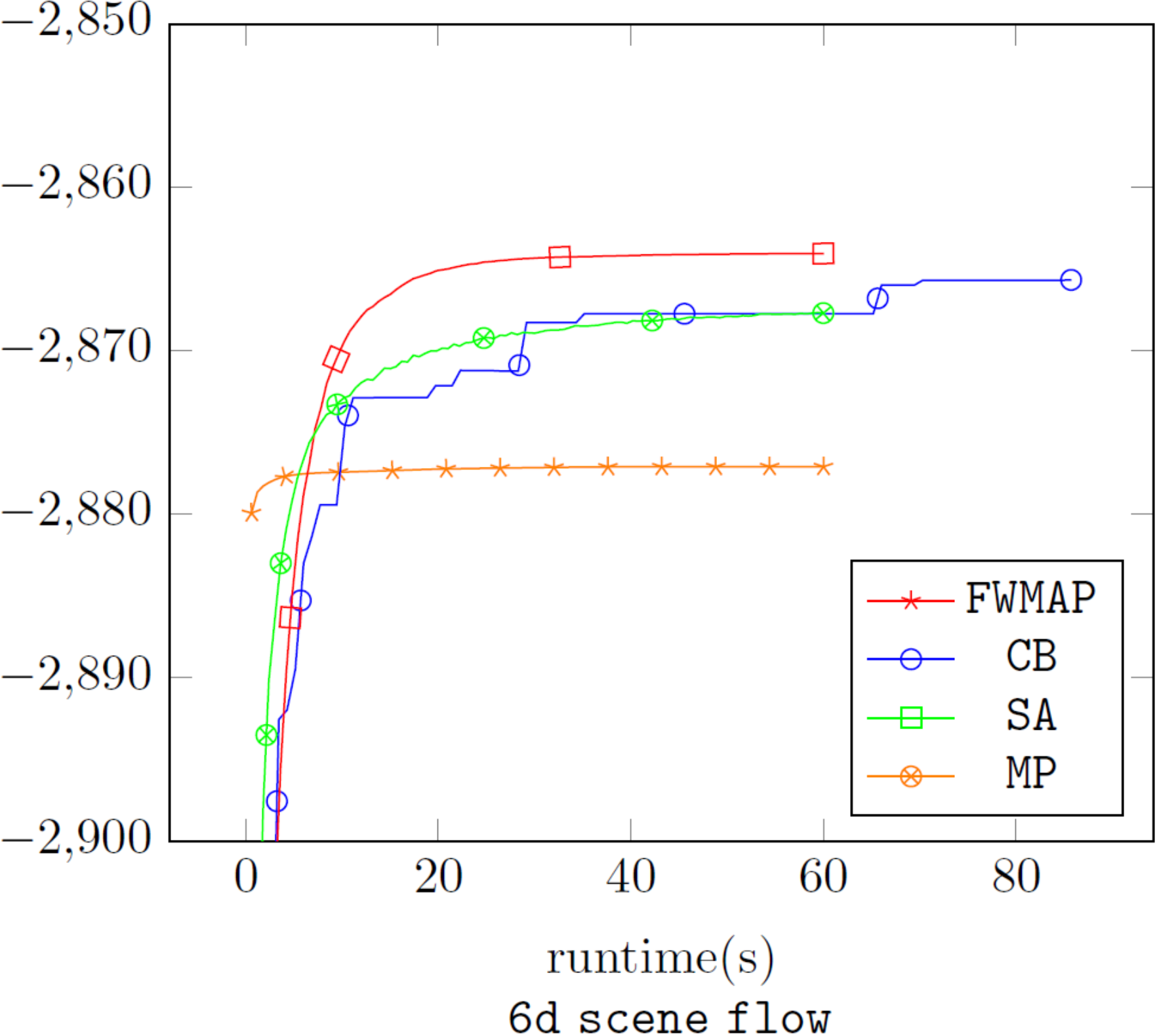} %
}}&
\\
\end{tabular}
  \caption{
    Averaged lower bound vs. runtime plots for the \textbf{protein folding} MRF dataset, discrete tomography (\textbf{synthetic} images with $2$, $4$ and $6$ projections, \textbf{sheep logan} image of sizes $64\times 64$ and $256\times 256$ with $2$, $4$ and $6$ projections), and the \textbf{6d scene flow} graph matching dataset.
  Values are averaged over all instances of the dataset.
  }
\label{fig:dataset_runtime_plots}
\end{figure*}

\begin{table*}
\centering
\begin{tabular}{|c@{\hspace{2pt}}|@{\hspace{2pt}}c@{\hspace{4pt}}c@{\hspace{4pt}}c@{\hspace{2pt}}|@{\hspace{2pt}}c@{\hspace{4pt}}c@{\hspace{4pt}}c@{\hspace{4pt}}c|}
\hline
\textbf{Dataset} & \# I & $\abs{V}$ & $\abs{E}$ & \texttt{FWMAP} & \texttt{CB} & \texttt{SA} & \texttt{MP} \\ \hline
\underline{MRF} &&&&&&& \\
\textbf{protein folding} & 11 & 33-40 & 528-780 & \textbf{ -12917.44 } & -12970.61 & -12960.30 & -13043.67 \\ \hline
\underline{Discrete tomography} &&&&&&&\\
\textbf{synthetic 2 proj.} & 9 & 1024 & 1984 & \textbf{266.12} &  265.89 & 239.39 & $\dagger$ \\ 
\textbf{synthetic 4 proj.} & 9 & 1024 & 1984 & \textbf{ 337.88 } & 336.33 & 316.61 & $\dagger$ \\ 
\textbf{synthetic 6 proj.} & 9 & 1024 & 1984 & \textbf{424.36}&  417.76 & 391.09 & $\dagger$ \\ 
\textbf{sheep logan} $\mathbf{64\times 64}$ & 3 & 4096 & 8064 &\textbf{ 897.18 } & 847.87 & 701.93 & $\dagger$ \\ 
\textbf{sheep logan} $\mathbf{256\times 256}$ & 3 & 65536 & 130560 &\textbf{ 4580.06 } & 4359.24 & 370.63 & $\dagger$ \\ \hline
\underline{Graph matching} &&&&&&& \\
\textbf{6d scene flow} & 6 & 48-126 & 1148-5352 & \textbf{ -2864.2 } & -2865.61 &  -2867.60 & -2877.08 \\ \hline
\end{tabular}
\caption{Dataset statistics and averaged maximum lower bound.
\# I denotes number of instances in dataset, $\abs{V}$ the number of nodes and $\abs{E}$ the number of edges in the underlying graphical model.
$\dagger$ means method is not applicable.
 \textbf{Bold numbers} indicate highest lower bound among competing algorithms.}  
\label{table:dataset_lower_bound_table}
\end{table*}

\begin{figure}
\begin{minipage}{0.47\columnwidth}
\centering
\includegraphics[width=0.7\textwidth]{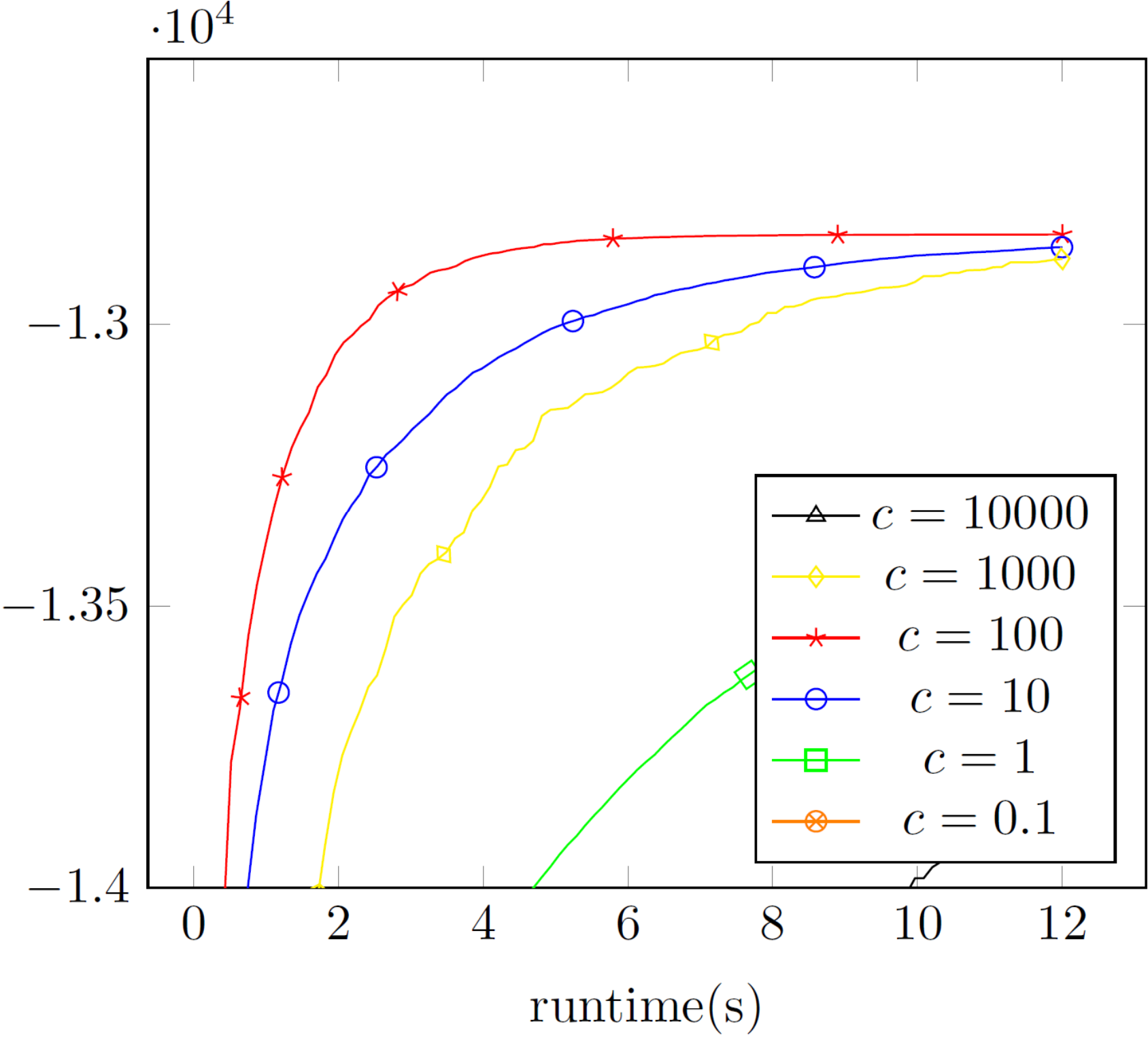} 
\caption{lower bound vs. runtime on instance 1CKK of the \textbf{protein folding} dataset solved with \texttt{FWMAP} and different values of proximal weight $c$ from~\eqref{eq:dual-decomposition-trust-region}.}
\label{fig:proximal_weight_comparison}
\end{minipage}
\hspace{10pt}
\begin{minipage}{0.47\columnwidth}
\centering
\includegraphics[width=0.7\textwidth]{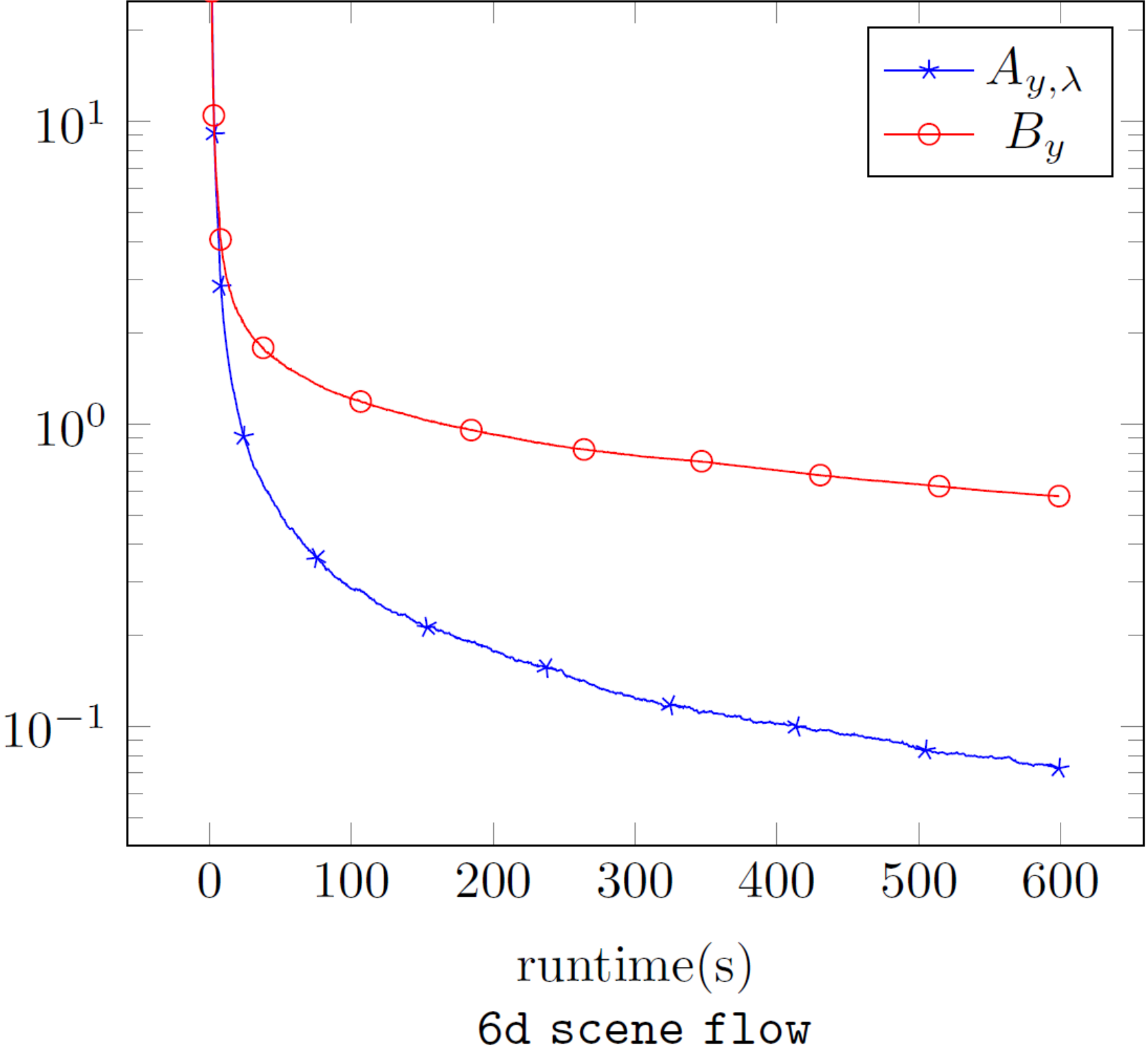} 
\caption{Duality gap quantities $A_{y,\lambda}$ and $B_{y}$ from~\eqref{eq:duality-gap-quantities} over time for the \textbf{synthetic 6 proj.} dataset from discrete tomography.}
\label{fig:duality-gap-quantities}
\end{minipage}
\end{figure}

We have chosen a selection of challenging MRF, discrete tomography and graph matching problems where message passing methods struggle or are not applicable.
    Detailed descriptions of these problems can be found in Appendix~\ref{sec:appendix}. 
\begin{remark}
Note that there are a large number of MRF and graph matching problems where message passing is the method of choice due to its greater speed and sufficient solution quality, see~\cite{OpenGMBenchmark,GraphMatchingMessagePassing}.
In such cases there is no advantage in using subgradient based solvers, which tend to be slower. 
However,  our chosen evaluation problems contain some of the most challenging MRF and graph matching problems with corresponding Lagrangean decompositions not solved satisfactorily with message passing solvers.
\end{remark}

We have excluded primal heuristics that do not solve a relaxation corresponding to our Lagrangean decomposition at all from comparison.
First, they do not deliver any lower bounds, hence cannot be directly compared.
Second, we see them as complementary solvers, since, when they are applied on the solved dual problem, their solution quality typically improves.
We have run every instance for 10 minutes on a Intel i5-5200U CPU.
Per-dataset plots showing averaged lower bound over time can be seen in Figure~\ref{fig:dataset_runtime_plots}.
Dataset statistics and final lower bounds averaged over datasets can be seen in Table~\ref{table:dataset_lower_bound_table}.
    Detailed results for each instance in each dataset can be found in in Appendix~\ref{sec:appendix}. 

\myparagraph{Solvers}
We focus our evaluation on methods that are able to handle general MAP inference problems
in which the access to subproblems is given by min-oracles. 
\begin{itemize}[leftmargin=1.5cm,noitemsep]
  \item[\texttt{FWMAP}:] Our solver as described in Section~\ref{sec:method}.
  \item[\texttt{CB}:] The state-of-the-art bundle method ConicBundle~\cite{ConicBundle}, which does not treat individual subproblems individually, but performs ascent on all Lagrangean multipliers $\lambda$ simultaneously.
  \item[\texttt{SA}:] Subgradient ascent with a Polyak step size rule.
  This solver was also used in~\cite{kappes2012bundle} for MRFs.
\end{itemize}
Additionally, we tested state-of-the-art versions of message passing (\texttt{MP}) solvers, when applicable.
\texttt{MP} is a popular method for MAP-MRF problems, and has recently been applied to the graph matching problem~\cite{GraphMatchingMessagePassing}.

All solvers optimize the same underlying linear programming relaxation.
Additionally, all subgradient based solvers also use the same decomposition.
This ensures that we compare the relevant solver methodology, not differences between relaxations or decompositions.

\myparagraph{Choice of proximal weight $c$ from~\eqref{eq:dual-decomposition-trust-region}}
The performance of our algorithm depends on the choice of the proximal weight parameter $c$ from~\eqref{eq:dual-decomposition-trust-region}.
A too large value will make each proximal step take long, while a too small value will mean too many proximal steps until convergence.
This behaviour can be seen in Figure~\ref{fig:proximal_weight_comparison}, where we have plotted lower bound against time for an MRF problem and \texttt{FWMAP} with different choices of proximal weight $c$.
We see that there is an optimal value of $100$, with larger and smaller values having inferior performance. 
However, we can also observe that performance of \texttt{FWMAP} is good for values an order of magnitude larger or smaller, hence \texttt{FWMAP} is not too sensitive on $c$.
It is enough to choose roughly the right order of magnitude for this parameter.

We have observed that the more subproblems there are in a problem decomposition~\eqref{eq:decomposition}, the smaller the proximal weight $c$ should be.
Since more subproblems usually translate to more complex dependencies in the decomposition, a smaller value of $c$ will be beneficial, as it makes the resulting more complicated proximal steps better conditioned.
A good formula for $c$ will hence be decreasing for increasing numbers of subproblems $\abs{T}$.
We have taken three instances out of the 50 we evaluated on and roughly fitted a curve that takes suitable values of proximal weight for these three instances, resulting in 
\begin{equation}
c =  \frac{1500000}{(\abs{T}+22)^2}\,.
\end{equation}

\myparagraph{Duality gap}
We have plotted the duality gap quantities $A_{y,\lambda}$ and $B_{y}$ from~\eqref{eq:duality-gap-quantities} for the \textbf{synthetic 6 proj.} dataset from discrete tomography in Figure~\ref{fig:duality-gap-quantities}.

\vspace{-4.3pt}
\subsection{Markov Random Fields.}
Most MRFs can be efficiently solved with message passing techniques~\cite{OpenGMBenchmark}, e.g.\ with the TRWS algorithm~\cite{TRWSKolmogorov}, which we denote by \texttt{MP}.
However, there are a few hard problems where TRWS gets stuck in suboptimal points, an example being the \textbf{protein folding} dataset~\cite{yanover2007minimizing} from computational biology.

\vspace{-4.3pt}
\subsection{Discrete tomography.}
In~\cite{SSVM_DT_2017} a dual decomposition based solver was proposed for the multi-label discrete tomography problem.
The decomposition was optimized with ConicBundle~\cite{ConicBundle}.
For our decomposition, message passing solvers are unfortunately not applicable.
The main problem seems that due to the unary potentials being zero, min-marginals for all subproblems are also zero.
Hence any min-marginal based step used in message passing will result in no progress. 
In other words, the initially zero Lagrangean multipliers are a local fix-point for message passing algorithms.
Therefore, we only compare against \texttt{SA} and \texttt{CB}.

\myparagraph{Datasets}
We compare on the synthetically generated text images from~\cite{SSVM_DT_2017}, denoted by \textbf{synthetic}.
These are $32 \times 32$ images of random objects with $2$, $4$ and $6$ projections directions.
We also compare on the classic \textbf{sheep logan} image with resolution $64 \times 64$ and $256 \times 256$ and $2$, $4$ and $6$ projections.  

\vspace{-4.3pt}
\subsection{Graph matching.}
As shown in~\cite{TorresaniEtAlQAP,HungarianBP}, Lagrangean decomposition based solvers are superior to solvers based on primal heuristics also in terms of the quality of obtained primal solutions.
In particular,~\cite{GraphMatchingMessagePassing} has proposed a message passing algorithm that is typically the method of choice and is on par/outperforms other message passing and subgradient based techniques on most problems.
We denote it by \texttt{MP}.

\myparagraph{Datasets}
There are a few problems where message passing based solvers proposed so far get stuck in suboptimal fixed points.
This behaviour occurred e.g.\ on the dataset~\cite{GraphFlow} in~\cite{GraphMatchingMessagePassing}, which we denote by \textbf{graph flow}.

\vspace{-4.3pt}
\subsection{Discussion}
Our solver \texttt{FWMAP} achieved the highest lower bound on each instance.
It was substantially better on the hardest and largest problems, e.g. on \textbf{sheep logan}.
While message passing solvers were faster (whenever applicable) in the beginning stages of the optimization, our solver \texttt{FWMAP} was fastest among subgradient based ones and eventually achieved a higher lower bound than the message passing one.
We also would like to mention that our solver had a much lower memory usage than the competing bundle solver \texttt{CB}.
On the larger problem instances, \texttt{CB} would often use all available memory on our 8~GB machine.

{\small
\bibliographystyle{ieee}
\bibliography{FW-MAP}
}

\clearpage
\appendix
\section{Proofs}
\label{sec:appendix}

The BLP relaxation~\cite{kolmogorov15:power} introduces
a probability distribution $\mu_i$ over $\{0,1\}$ for each $i\in[d]$
and a probability distribution $\mu_t$ over $\dom f_t$ for each $t\in T$.
It can be written as follows:
\begin{equation}\label{eq:BLP}
\begin{array}{rll}
\min\limits_{\mu\ge {\bf 0}} & \sum\limits_{t\in T}\sum\limits_{z\in \dom f_t} \mu_t(z)f_t(z) 
\\
  \mbox{s.t.} & \mu_i(0)+\mu_i(1)=1 & \forall i\in [d] \\
              & \sum\limits_{z\in\dom f_t} \mu_t(z)=1 & \forall t\in T \\
              & \!\!\!\!\!\!\sum\limits_{z\in\dom f_t:z_i=a} \!\!\!\!\!\mu_t(z)_i=\mu_{i}(a) & \forall t\in T,i\in A_t,a\in\{0,1\} 
\end{array} 
\end{equation}
Let us show that the optimal values of~\eqref{eq:BLP} and~\eqref{eq:decomposition1:relaxation} coincide.

\begin{proof}[Proof of equivalence of~\eqref{eq:BLP} and~\eqref{eq:decomposition1:relaxation}]
Define an extension $\hat f_t:\mathbb R^{A_t}\rightarrow\mathbb R\cup\{+\infty\}$
of function $\hat f:\{0,1\}^{A_t}\rightarrow\mathbb R\cup\{+\infty\}$ as follows: for 
a vector $x\in\mathbb R^{A_t}$ set
\begin{equation}\label{eq:BLP:t}
\begin{array}{rll}
\hat f_t(x)=\min\limits_{\mu_t\ge {\bf 0}} & \sum\limits_{z\in \dom f_t} \mu_t(z)f_t(z) 
\\
  \mbox{s.t.} & \sum\limits_{z\in\dom f_t} \mu_t(z)=1 &  \\
              & \!\!\!\!\!\!\sum\limits_{z\in\dom f_t} \!\!\!\!\!\mu_t(z)\cdot z=x
\end{array}
\end{equation}
Note, if $x\notin[0,1]^{A_t}$ then~\eqref{eq:BLP:t} does not have a feasible solution,
and so $\hat f_t(x)=+\infty$. Observe that the constraints in the last line of~\eqref{eq:BLP}
for $a=0$ are redundant - they follow from the remaining constraints.
Also observe that constraints $\sum_{z\in\dom f_t:z_i=1} \mu_t(z)_i=\mu_{i}(1)$
for $i\in A_t$ can be written as $\sum_{z\in\dom f_t} \mu_t(z)\cdot z=x$
if we denote $x_i=\mu_{i}(1)$ for $i\in A_t$.
Therefore, problem~\eqref{eq:BLP} can be equivalently rewritten as follows:
\begin{equation}\label{eq:BLP2}
\min_{x\in\mathbb R^d} \; \sum_{t\in T}\hat f(x_{A_t})
\end{equation}
It can be seen that the last problem is equivalent to~\eqref{eq:decomposition1:relaxation}.
Indeed, we just need to observe that for each $t\in T$ and $x\in \mathbb R^{A_t}$ we have
$$
\min_{\substack{y\in\conv(\SY_t) \\ y_\ast=x}} y_\circ
=
\min_{\substack{\alpha\ge{\bf 0},\;\sum_{z\in\dom f_t} \alpha(z)=1 \\ y=\sum_{z\in\dom f_t} \alpha(z)\cdot [z\; f(z)] \\ y_\ast=x}} y_\circ \hspace{30pt} 
$$
$$
\begin{array}{c}
=\min\limits_{\substack{\alpha\ge{\bf 0},\;\sum_{z\in\dom f_t} \alpha(z)=1 \\ \sum_{z\in\dom f_t} \alpha(z)\cdot z=x}} \sum_{z\in\dom f_t}\alpha(z)f(z)  = \hat f_t(x)
\end{array}
$$
\end{proof}

\begin{proof}[Proof of Proposition~\ref{prop:dual-of-decomposition}]
Write $f(y) := \sum_{t\in T}y^t_\circ$, then problem~\eqref{eq:decomposition1:relaxation} can be written as
\begin{equation}
  \min_{\substack
{ (y,x) \in \Y\times\mathbb R^d
\\y^t_\star = x_{A_t}\; \forall t\in T}
} \;\;
f(y)
\end{equation}
The Lagrangian w.r.t.\ the equality constraints is given by
\begin{eqnarray*}
L(y,x,\lambda)
&=&f(y)+\sum_{t\in T}\langle y^t_\star-x_{A_t},\lambda^t\rangle \\
&=& \sum_{t\in T}\langle y^t,[\lambda^t\; 1] \rangle - \sum_{t\in T} \langle x_{A_t},\lambda^t \rangle
\end{eqnarray*}
Therefore, the dual function for $\lambda\in\bigotimes_{t\in T}\mathbb R^{A_t}$ is
\begin{eqnarray*}
h(\lambda)&=&\min_{(y,x) \in \Y\times\mathbb R^d}L(y,x,\lambda) \\
&=&\begin{cases}
\sum\limits_{t\in T}\min\limits_{y^t\in\Y_t}\langle y^t,[\lambda^t\; 1]\rangle & \mbox{if }\lambda\in\Lambda \\
-\infty      & \mbox{otherwise}
\end{cases}
\end{eqnarray*}
The problem  can thus be formulated as $\max_\lambda h(\lambda)$,
or equivalently as $\max_{\lambda\in\Lambda} h(\lambda)$. This coincides with
formulation given in Proposition~\ref{prop:dual-of-decomposition}.

Since constraint $y\in\Y$ can be expressed as a linear program,
the duality between~\eqref{eq:decomposition1:relaxation}
and~\eqref{eq:dual-decomposition} can be viewed as a special case of linear programming (LP)
duality (where the value of function $h(\lambda)$ is also written
as a resulting of some LP). For LPs it is known that strong duality holds assuming
that either the primal or the dual problems have a feasible solution.
This holds in our case, since vector $\lambda={\bf 0}\in\Lambda$ is feasible.
We can conclude that we have a strong duality between \eqref{eq:decomposition1:relaxation}
and~\eqref{eq:dual-decomposition}.
\end{proof}

\begin{proof}[Proof of Proposition~\ref{prop:dual-of-decomposition-plus-trust-region}]
First, we derive the dual of $h_{\mu,c}$:
\begin{equation*}
\begin{array}{rl}
 & 
\max\limits_{\lambda \in \Lambda} h_{\mu,c}(\lambda)
\\
= &
\max\limits_{\lambda \in \Lambda} \sum\limits_{t\in T} \min\limits_{y^t \in \Y_t} \langle y^t, [\lambda^t\ 1] \rangle - \frac{1}{2c} \norm{\lambda^t - \mu^t}^2
\\
= &
\min\limits_{y \in \Y} 
\underbrace{
\max\limits_{\lambda \in \Lambda} 
\sum\limits_{t \in T} \langle y^t, [\lambda^t\ 1] \rangle - \frac{1}{2c} \norm{\lambda^t - \mu^t}^2
}_{=: f_{\mu,c}(y)}
\\
\end{array} 
\end{equation*}
The function $f_{\mu,c}(y)$ has a closed form expression, since it is a quadratic function subject to linear equalities.
Write $\nu_i = \frac{1}{|T_i|} \sum_{t\in T_i} (c \cdot y^t_i + \mu^t_i)$ for $i \in [d]$.
The $\argmax$ in the expression defining $f_{\mu,c}(y)$ are
\begin{equation}
\lambda^t = ( c \cdot y^t_\star + \mu^t ) - \nu_{A_t}
\end{equation}

The function value is
\begin{equation*}
\begin{array}{rl}
f_{\mu,c}(x) 
&
= \sum\limits_{t \in T} \langle y^t , [ \lambda^t_\star\ 1] \rangle - \frac{1}{2c} \norm{\lambda^t_\star - \mu^t}^2
\\
&
=
\sum\limits_{t \in T} \left( \begin{array}{cc}\langle y^t_\star, c\cdot y^t_\star + \mu^t - \nu_{A_t} \rangle + y^t_\circ \\- \frac{1}{2c} \norm{c x^t_\star  + \mu^t - \nu_{A_t} - \mu^t}^2 \end{array} \right)
\\
&
=
\sum\limits_{t \in T} \left( \begin{array}{cc}c \norm{y^t_\star}^2 + \langle y^t_\star, \mu^t - \nu_{A_t} \rangle + y^t_\circ  \\ - \frac{1}{2c} \norm{c y^t_\star - \nu_{A_t}}^2 \end{array} \right)
\\
&
=
\sum\limits_{t \in T} \left( \begin{array}{cc} c \norm{y^t_\star}^2 + \langle y^t_\star, \mu^t - \nu_{A_t} \rangle + y^t_\circ \\ - \frac{1}{2c} \left\{\norm{c y^t_\star}^2 - 2c \langle y^t_\star, \nu_{A_t}\rangle + \norm{\nu_{A_t}}^2 \right\} \end{array} \right)
\\
&
=
\sum\limits_{t \in T} \left( \frac{c}{2} \norm{y^t_\star}^2 + \langle y^t_\star, \mu^t \rangle + y^t_\circ - \frac{1}{2c} \norm{\nu_{A_t}}^2 \right)\,.
\end{array}
\end{equation*}
The gradient is
$\nabla_t f_{\mu,c}(y) = [ c\cdot y^t + \mu^t - \nu_{A_t} \ 1] = [\lambda^t\ 1]$.
\end{proof}

\begin{proof}[Proof of Proposition~\ref{prop:AB}] 
Let $\overline{\Y\times\mathbb R^d}$ be the set of vectors $(y,x)\in \Y\times\mathbb R^d$
satisfying the equality constraints $y^t_\star=x_{A_t}$ for all $t$.
By construction, for any  $\lambda\in\Lambda$ we have 
\begin{subequations}
\begin{eqnarray}
f(y)&\!\!\!\!=\!\!\!\!& L(y,x,\lambda) \quad \forall (y,x)\in \overline{\Y\times\mathbb R^d}\quad \label{eq:AB:proof:a} \\
\hspace{-10pt}L(y,x,\lambda)&\!\!\!\!\ge\!\!\!\!&  h(\lambda) \hspace{31pt} \forall (y,x)\in {\Y\times\mathbb R^d} \label{eq:AB:proof:b} \\
\hspace{-10pt}L(y,x,\lambda)&\!\!\!\!=\!\!\!\!&\sum_{t\in T}\langle y^t,[\lambda^t\; 1] \rangle \label{eq:AB:proof:c}
\end{eqnarray}
\end{subequations}
Eq.~\eqref{eq:AB:proof:c} gives that $A_{y,\lambda}=L(y,x,\lambda)-h(y)$
for any  $(y,\lambda)\in\Y\times\Lambda$ and $x\in\mathbb R^d$, and so from~\eqref{eq:AB:proof:b} we get that $A_{y,\lambda}\ge 0$.
Clearly, we have $B_y\ge 0$. The following two facts imply part (b) of Proposition~\ref{prop:AB}:
\begin{itemize}
\item Consider vector $y\in\Y$. Then $B_y=0$ if and only if $(y,x)\in\overline{\Y\times\mathbb R^d}$ for some $x$.
(This can be seen from the definition of $B_y$ in Section~\ref{sec:duality-gap}).
\item Consider vectors $(y,x)\in \overline{\Y\times\mathbb R^d}$ and $\lambda\in\Lambda$.
They are an optimal primal-dual pair if and only if $f(y)=h(\lambda)$,
which in turn holds if and only if $A_{y,\lambda}=0$ (since $A_{y,\lambda}=L(y,x,\lambda)-h(\lambda)=f(y)-h(\lambda)$).
\end{itemize}
It remains to show inequality~\eqref{eq:gap-bound}.
Denote $\delta=\lambda^\ast-\lambda$, then $\sum\limits_{t\in T_i}\delta^t_i=0$ for any $i\in[d]$.
Denoting $y^-_i=\min\limits_{t\in T_i}y^t_i$ and $y^+_i=\max\limits_{t\in T_i}y^t_i$, we get
\begin{eqnarray*}
\sum_{t\in T_t} y^t_i\cdot \delta^t_i
&=&\sum_{t\in T_t} \left[y^t_i-y^-_i\right]\cdot \delta^t_i \\
&\le &\sum_{t\in T_t} \left[y^+_i-y^-_i\right]\cdot |\delta^t_i| \\
&\le & \left[y^+_i-y^-_i\right]\cdot \norm{\delta}_{1,\infty}
\end{eqnarray*}
Summing these inequalities over $i\in[d]$ gives
$$
\sum_{t\in T}\langle y^t_\star,\delta^t \rangle
\le B_y\cdot \norm{\delta}_{1,\infty}
$$
Recalling that $A_{\lambda^\ast,y}\ge 0$, we obtain the desired claim:
\begin{eqnarray*}
h(\lambda^\ast)
&\le& \sum_{t\in T}\langle y^t,[(\lambda^\ast)^t\; 1]\rangle \\
&=& \sum_{t\in T}\langle y^t,[\lambda^t\; 1]\rangle + \sum_{t\in T} \langle y^t_\star,\delta^t\rangle \\
&\le & \sum_{t\in T}\langle y^t,[\lambda^t\; 1]\rangle + B_y\cdot \norm{\delta}_{1,\infty}
\end{eqnarray*}
\end{proof}

\begin{lemma}[step size in Algorithm~\ref{alg:BCFW}]
\label{lemma:step-size-BCFW}
The optimal step size $\gamma$ in Algorithm~\ref{alg:BCFW} is
\begin{equation}
\gamma = 
\frac{\langle \nabla_t f_{\mu,c}(y) , y^t - z^t \rangle}
{c \norm{y^t_\star - z^t_\star}^2}
=
\frac{\langle [c\cdot y^t_\star + \mu^t - \nu_{A_t}\ 1] , y^t - z^t \rangle}
{c \norm{y^t_\star - z^t_\star}^2}
\end{equation}
and clip $\gamma$ to [0,1].
\end{lemma}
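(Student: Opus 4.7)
The plan is to reduce the univariate optimization $\min_{\gamma \in [0,1]} f_{\mu,c}(y(\gamma))$ to minimizing a scalar quadratic. Since Proposition~\ref{prop:dual-of-decomposition-plus-trust-region} gives a closed form showing that $f_{\mu,c}$ is quadratic in $y$, and since $y(\gamma)^t = y^t + \gamma(z^t - y^t)$ is affine in $\gamma$ with all other components $y(\gamma)^s = y^s$ ($s \neq t$) held constant, the restriction $g(\gamma) := f_{\mu,c}(y(\gamma))$ is a quadratic $g(\gamma) = a\gamma^2 + b\gamma + \mathrm{const}$ in $\gamma$. Whenever $a > 0$ the unique unconstrained minimizer is $\gamma^{\mathrm{unc}} = -b/(2a)$, and convexity of $g$ immediately implies that the constrained minimizer over $[0,1]$ is $\gamma^{\mathrm{unc}}$ clipped to that interval.

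The coefficients $a$ and $b$ would be identified as follows. The linear term is $g'(0) = \langle \nabla_t f_{\mu,c}(y),\, z^t - y^t \rangle$ by the chain rule; using the gradient formula already established in Proposition~\ref{prop:dual-of-decomposition-plus-trust-region}, this equals $\langle [\lambda^t\; 1],\, z^t - y^t\rangle = -\langle [\lambda^t\; 1],\, y^t - z^t\rangle$, so $b = -\langle \nabla_t f_{\mu,c}(y),\, y^t - z^t\rangle$. The quadratic coefficient $a$ is obtained by substituting $y(\gamma)^t = y^t + \gamma(z^t - y^t)$ into the explicit formula for $f_{\mu,c}(y)$ in Proposition~\ref{prop:dual-of-decomposition-plus-trust-region} and collecting only the terms proportional to $\gamma^2$; one then verifies that these combine to $\frac{c}{2}\|y^t_\star - z^t_\star\|^2$, so $2a = c\|y^t_\star - z^t_\star\|^2$. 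Plugging into $\gamma^{\mathrm{unc}} = -b/(2a)$ yields
\begin{equation*}
\gamma^{\mathrm{unc}} \;=\; \frac{\langle \nabla_t f_{\mu,c}(y),\, y^t - z^t\rangle}{c\,\|y^t_\star - z^t_\star\|^2} \;=\; \frac{\langle [c\cdot y^t_\star + \mu^t - \nu_{A_t}\; 1],\, y^t - z^t\rangle}{c\,\|y^t_\star - z^t_\star\|^2},
\end{equation*}
which is exactly the claimed formula; a final projection onto $[0,1]$ then gives the optimal $\gamma$.

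The only nontrivial step is the $\gamma^2$ bookkeeping. The piece $\frac{c}{2}\|y(\gamma)^t_\star\|^2$ of $f_{\mu,c}$ obviously contributes $\frac{c}{2}\|y^t_\star - z^t_\star\|^2$ to the $\gamma^2$ coefficient, but the term $-\sum_i \frac{|T_i|}{2c}\nu_i^2$ also depends on $\gamma$ through $\nu_i = \frac{1}{|T_i|}\sum_{t \in T_i}(c \cdot y^t_i + \mu^t_i)$ whenever $i \in A_t$, producing additional cross-terms when expanded. The main care required is to collect these contributions and verify that they combine as claimed; everything else reduces to the standard one-dimensional convex quadratic line-search and the trivial observation that, when $y^t_\star = z^t_\star$, the oracle direction produces no change and any $\gamma \in [0,1]$ is optimal.
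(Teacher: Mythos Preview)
Your approach --- expand $g(\gamma)=f_{\mu,c}(y(\gamma))$ as a scalar quadratic and take $-b/(2a)$ --- is exactly the paper's (the paper phrases it as computing $g'(\gamma)$, reading off its constant and $\gamma$-linear pieces, and setting it to zero). Your identification of the linear coefficient $b=g'(0)=\langle\nabla_t f_{\mu,c}(y),\,z^t-y^t\rangle$ via the gradient formula from Proposition~\ref{prop:dual-of-decomposition-plus-trust-region} is correct and matches the paper.

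The gap is precisely where you flag it. You note that $-\sum_i\frac{|T_i|}{2c}\nu_i^2$ also depends on $\gamma$ and say the ``main care required is to collect these contributions and verify that they combine as claimed,'' but you do not carry this out --- and the hoped-for cancellation in fact fails. Writing $w=z^t-y^t$, one has $\nu_i(\gamma)=\nu_i+\tfrac{c\gamma}{|T_i|}\,w_i$ for $i\in A_t$, so the $\gamma^2$ contribution from $-\tfrac{|T_i|}{2c}\nu_i(\gamma)^2$ equals $-\tfrac{c}{2|T_i|}w_i^2$. Hence the true quadratic coefficient is
\[
a\;=\;\frac{c}{2}\sum_{i\in A_t}\Bigl(1-\frac{1}{|T_i|}\Bigr)(y^t_i-z^t_i)^2,
\]
not $\tfrac{c}{2}\norm{y^t_\star-z^t_\star}^2$. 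The paper's derivation reaches the stated denominator by keeping $\nu_{A_t}$ frozen at its pre-update value when writing $\nabla_t f_{\mu,c}(y(\gamma))=[c\,y(\gamma)^t_\star+\mu^t-\nu_{A_t}\;1]$; under that convention the computation is one line, and this is exactly what the step size in Algorithm~\ref{alg:BCFW} implements (note that $\nu$ is updated only in line~6, after $\gamma$ has been chosen). So the cross-terms you warn about do not cancel: your asserted identity $2a=c\norm{y^t_\star-z^t_\star}^2$ cannot be established for the full function $f_{\mu,c}(y(\gamma))$, only under the paper's implicit fixed-$\nu$ convention.
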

\begin{proof}
Recall that $y(\gamma)$ in algorithm~\ref{alg:BCFW} is defined as
$y(\gamma)^s = \left\{ \begin{array}{ll} y^s, & s \neq t \\ (1-\gamma)y^t + \gamma z^t,& s = t \end{array} \right.$.
The derivative $f_{\mu,c}(y(\gamma))' = \langle  \nabla f_{\mu,c}(y(\gamma)), y(\gamma)' \rangle$ is hence zero except in the $t$-th place.
Thus,
\begin{equation}
\begin{array}{rl}
&
f_{\mu,c}(x(\gamma))' \\
= & 
 \langle \nabla_t f_{\mu,c}(y), - y^t + z^t \rangle
\\
= &
\langle [ c \cdot y^t_\star(\gamma) + \mu^t - \nu_{A_t}\ 1], -y^t + z^t \rangle
\\
= &
\begin{array}{c}\langle [ c \cdot y^t_\star + \mu^t - \nu_{A_t}\ 1], -y^t + z^t \rangle \\ + \gamma \langle c\cdot(-y^t_\star + z^t_\star), -y^t_\star + z^t_\star \rangle \end{array}
\end{array}
\end{equation}
Setting the above derivative zero yields
$$\gamma = 
\frac{\langle [c\cdot y^t_\star + \mu^t - \nu_{A_t}\ 1] , y^t - z^t \rangle}
{c \norm{y^t_\star - z^t_\star}^2} \,.
$$
Recalling that we require $\gamma \in [0,1]$, we get the desired formula.  
\end{proof}

\section{Detailed experimental evaluation}

In Table~\ref{fig:instance-table} we give the final lower bound obtained by each tested algorithm for every instance of every dataset we evaluated on.
The averaged numbers are given in Table~\ref{table:dataset_lower_bound_table}.

{\onecolumn
\centering
\begin{longtable}{|lcccc|}
\caption{Lower bound of each instance. $\dagger$ means method not applicable. \textbf{Bold} numbers indicate highest lower bound among competing methods.} \\ \hline 
\label{fig:instance-table}
Instance & \texttt{FWMAP} & \texttt{CB} & \texttt{SA} & \texttt{MP} \\ \hline \endhead
\multicolumn{ 5 }{| c |}{ \underline{MRF}} \\ 
\multicolumn{ 5 }{| c |}{ \textbf{ protein folding } } \\ \hline 
1CKK & \textbf{ -12840.23 } & -12857.29 &  -12945.39 &-12924.97  \\ \hline
1CM1 & \textbf{ -12486.15 } & -12499.21 &  -12591.23 &-12488.10  \\ \hline
1SY9 & \textbf{ -9193.38 } & -9196.14 &  -9293.58 &-9194.77  \\ \hline
2BBN & \textbf{ -12396.51 } & -12461.89 &  -12585.85 & -12417.20  \\ \hline
2BCX & \textbf{ -14043.57 } & -14144.89 &  -14231.86 &-14112.73  \\ \hline
2BE6 & \textbf{ -13311.78 } & -13381.35 &  -13410.24 &-13438.23  \\ \hline
2F3Y & \textbf{ -14572.71 } & -14619.70 &  -14672.71 &-14641.60  \\ \hline
2FOT & \textbf{ -12049.52 } & -12112.31 &  -12154.66 &-12103.75  \\ \hline
2HQW & \textbf{ -13514.79 } & -13573.99 &  -13610.14 &-13539.69  \\ \hline
2O60 & \textbf{ -13557.32 } & -13664.00 &  -13718.71 &-13565.42  \\ \hline
3BXL & \textbf{ -14125.86 } & -14165.97 &  -14266.01 &-14136.79  \\ \hline
\multicolumn{ 5 }{| c |}{ \underline{Discrete tomography}} \\ 
\multicolumn{ 5 }{| c |}{ \textbf{ 2 projections } } \\ \hline
0.10\_0.10\_2 & \textbf{ 97.99 } & 97.94 & 96.46 & $\dagger$ \\ \hline
0.20\_0.20\_2 & \textbf{ 226.81 } & 226.66 & 222.05 & $\dagger$ \\ \hline
0.30\_0.30\_2 & \textbf{ 205.65 } & 205.25 & 194.49 & $\dagger$ \\ \hline
0.40\_0.40\_2 & \textbf{ 271.23 } & 270.99 & 253.94 & $\dagger$ \\ \hline
0.50\_0.48\_87 & \textbf{ 340.13 } & 339.98 & 315.41 & $\dagger$ \\ \hline
0.60\_0.58\_28 & \textbf{ 313.19 } & 312.80 & 288.73 & $\dagger$ \\ \hline
0.70\_0.67\_47 & \textbf{ 287.11 } & 286.83 & 246.04 & $\dagger$ \\ \hline
0.80\_0.76\_72 & \textbf{ 338.97 } & 338.78 & 290.73 & $\dagger$ \\ \hline
0.90\_0.85\_63 & \textbf{ 313.98 } & 313.77 & 246.63 & $\dagger$ \\ \hline
\multicolumn{ 5 }{| c |}{ \textbf{ 4 projections } } \\ \hline
0.10\_0.10\_2 & \textbf{ 102.00 } & 101.55 & 99.50 & $\dagger$ \\ \hline
0.20\_0.20\_2 & \textbf{ 250.61 } & 250.02 & 245.30 & $\dagger$ \\ \hline
0.30\_0.30\_2 & \textbf{ 247.86 } & 246.44 & 233.65 & $\dagger$ \\ \hline
0.40\_0.40\_2 & \textbf{ 365.05 } & 364.00 & 346.89 & $\dagger$ \\ \hline
0.50\_0.48\_87 & \textbf{ 439.60 } & 435.50 & 412.32 & $\dagger$ \\ \hline
0.60\_0.58\_28 & \textbf{ 400.91 } & 400.05 & 368.05 & $\dagger$ \\ \hline
0.70\_0.67\_47 & \textbf{ 393.88 } & 392.57 & 371.80 & $\dagger$ \\ \hline
0.80\_0.76\_72 & \textbf{ 443.87 } & 440.91 & 413.42 & $\dagger$ \\ \hline
0.90\_0.85\_63 & \textbf{ 397.14 } & 395.93 & 358.60 & $\dagger$ \\ \hline
\multicolumn{ 5 }{| c |}{ \textbf{ 6 projections } } \\ \hline
0.10\_0.10\_2 & \textbf{ 102.00 } & \textbf{ 102.00 } & 101.82 & $\dagger$ \\ \hline
0.20\_0.20\_2 & \textbf{ 256.00 } & 255.85 & 254.74 & $\dagger$ \\ \hline
0.30\_0.30\_2 & \textbf{ 295.85 } & 292.28 & 272.74 & $\dagger$ \\ \hline
0.40\_0.40\_2 & \textbf{ 461.27 } & 456.70 & 433.89 & $\dagger$ \\ \hline
0.50\_0.48\_87 & \textbf{ 533.95 } & 526.86 & 494.29 & $\dagger$ \\ \hline
0.60\_0.58\_28 & \textbf{ 514.05 } & 507.34 & 474.61 & $\dagger$ \\ \hline
0.70\_0.67\_47 & \textbf{ 577.38 } & 566.15 & 530.47 & $\dagger$ \\ \hline
0.80\_0.76\_72 & \textbf{ 542.96 } & 534.01 & 488.62 & $\dagger$ \\ \hline
0.90\_0.85\_63 & \textbf{ 535.78 } & 518.67 & 468.60 & $\dagger$ \\ \hline
\multicolumn{ 5 }{| c |}{ \textbf{ sheep logan 64x64 } } \\ \hline
Logan\_64\_2 & \textbf{ 582.52 } & 541.62 & 392.47 & $\dagger$ \\ \hline
Logan\_64\_4 & \textbf{ 871.58 } & 831.63 & 702.32 & $\dagger$ \\ \hline
Logan\_64\_6 & \textbf{ 1237.44 } & 1170.36 & 1011.00 & $\dagger$ \\ \hline
\multicolumn{ 5 }{| c |}{ \textbf{ sheep logan 256x256 } } \\ \hline
Logan\_256\_2 & \textbf{ 3709.46 } & 3505.46 & 2599.41 & $\dagger$ \\ \hline
Logan\_256\_4 & \textbf{ 4888.25 } & 4739.40 & 976.29 & $\dagger$ \\ \hline
Logan\_256\_6 & \textbf{ 5142.48 } & 4832.85 & -2463.81 & $\dagger$ \\ \hline
\multicolumn{ 5 }{| c |}{ \underline{Graph matching}} \\*
\multicolumn{ 5 }{| c |}{ \textbf{ 6d scene flow } } \\* \hline
board & \textbf{ -2262.66 } & -2262.66 &  -2262.89 &  \textbf{ -2262.66 } \\ \hline
books & \textbf{ -4179.79 } & -4186.16 &  -4191.30 &  -4204.14 \\ \hline
hammer & \textbf{ -2125.87 } & -2127.66 &  -2130.58 &  -2146.81 \\ \hline
party & \textbf{ -3648.03 } & -3648.71 &  -3649.41 &  -3657.12 \\ \hline
table & \textbf{ -3340.59 } & -3341.12 &  -3343.81 &  -3363.98 \\ \hline
walking & \textbf{ -1627.30 } & -1627.34 &  -1627.58 &  -1627.79 \\ \hline
\end{longtable}
}

\end{document}